\documentclass[11pt]{article}
\pdfoutput=1

\usepackage[letterpaper,margin=1in]{geometry}
\usepackage[utf8]{inputenc}
\usepackage[T1]{fontenc}
\usepackage{microtype}
\usepackage{url}
\usepackage{booktabs}
\usepackage{amsfonts}
\usepackage{nicefrac}
\usepackage{xcolor}
\definecolor{revisionred}{rgb}{0,0,0}
\newcommand{\rev}[1]{#1}
\usepackage{bbm}
\usepackage{mathtools}
\usepackage{amsmath}
\usepackage{amssymb}
\usepackage{amsthm}
\usepackage{bm}
\usepackage{comment}
\usepackage{graphicx}
\usepackage{subcaption}
\usepackage{longtable}
\usepackage{multirow}
\usepackage{float}
\usepackage{etoolbox}
\usepackage[round,authoryear]{natbib}
\usepackage[ruled,vlined]{algorithm2e}
\usepackage[hidelinks]{hyperref}
\hypersetup{
  pdftitle={Conformal Risk-Averse Decision Making with Action Conditional Guarantee},
  pdfauthor={Zihan Zhu, Shayan Kiyani, George Pappas, Hamed Hassani},
  pdfkeywords={conformal prediction, uncertainty quantification, risk-averse decision making, action-conditional guarantee},
  linkcolor=black,
  citecolor=black,
  filecolor=black,
  urlcolor=black
}
\usepackage[capitalize,noabbrev]{cleveref}

\DeclareMathOperator*{\argmax}{arg\,max}
\DeclareMathOperator*{\argmin}{arg\,min}

\theoremstyle{plain}
\newtheorem{theorem}{Theorem}[section]
\newtheorem{proposition}[theorem]{Proposition}
\newtheorem{lemma}[theorem]{Lemma}
\newtheorem{corollary}[theorem]{Corollary}
\theoremstyle{definition}

\newtheorem{assumption}[theorem]{Assumption}
\theoremstyle{remark}
\newtheorem{remark}[theorem]{Remark}

\usepackage[disable,textsize=tiny]{todonotes}
\newcommand{\sk}[1]{}
\newcommand{\zz}[2]{}

\title{Conformal Risk-Averse Decision Making with Action Conditional Guarantee}

\author{
Zihan Zhu\textsuperscript{1,*}
\qquad
Shayan Kiyani\textsuperscript{1,*}
\qquad
George Pappas\textsuperscript{1}
\qquad
Hamed Hassani\textsuperscript{1}
}

\date{\today}

\begin{document}
\maketitle

\begingroup
\renewcommand{\thefootnote}{\fnsymbol{footnote}}
\footnotetext[1]{Equal contribution.}
\endgroup

\begingroup
\renewcommand{\thefootnote}{\arabic{footnote}}
\footnotetext[1]{University of Pennsylvania. Emails: \href{mailto:zhzhu1@wharton.upenn.edu}{\nolinkurl{zhzhu1@wharton.upenn.edu}}, \href{mailto:shayank@seas.upenn.edu}{\nolinkurl{shayank@seas.upenn.edu}}, \href{mailto:pappasg@seas.upenn.edu}{\nolinkurl{pappasg@seas.upenn.edu}}, and \href{mailto:hassani@seas.upenn.edu}{\nolinkurl{hassani@seas.upenn.edu}}.}
\endgroup
\setcounter{footnote}{0}

\begin{abstract}
    Reliable decision making pipelines powered by machine learning models require uncertainty quantification (UQ) methods that come with explicit safety guarantees. Conformal prediction provides such UQ by wrapping ML predictions into prediction sets, and recent work by \cite{kiyani2025decision} established that these sets can be translated into optimal risk-averse decision policies—yet only inheriting marginal safety guarantees. We generalize and strengthen their results by (i) introducing action-conditional conformal prediction, which yields safety guarantees conditioned explicitly on each action taken by the decision maker, (ii) showing that action-conditional prediction sets serve as a proxy for the feasible decision space for risk-averse decision makers aiming to optimize action-conditional value-at-risk, and (iii) proposing a principled finite-sample algorithm based on pinball-loss minimization, connecting the framework of \cite{gibbs2025conformal} to action-conditional guarantees. Experiments on two real-world datasets confirm that our approach significantly improves action-conditional performance over conformal baselines.
\end{abstract}


\section{Introduction}\label{sec:intro}
In many high-stakes scenarios—such as clinical medicine, finance, or autonomous systems—machine learning models are increasingly deployed to assist in decision making. Consider a healthcare application, where a predictive model is suggesting a serious condition based on a patient data. Underestimating the model's predictive uncertainty could lead to recommending a high-risk treatment that results in catastrophic outcomes. Overestimating the uncertainty, on the other hand, could lead to overly conservative decisions that forgo potentially life-saving interventions. This illustrates a fundamental trade-off in practice: balancing safety, by guarding against low-utility outcomes, with utility, by leveraging predictive information to maximize gains.

Conformal prediction (CP) has emerged as a central framework for uncertainty quantification in supervised learning, offering distribution-free, model-agnostic guarantees with efficient post-hoc implementations. In the standard setup, let \((X, Y) \sim \mathcal{D}\) be a random pair with feature \(X \in \mathcal{X}\) and label \(Y \in \mathcal{Y}\). CP constructs prediction sets \(C(X) \subseteq \mathcal{Y}\) that satisfy a marginal coverage guarantee: $\mathbb{P}(Y \in C(X)) \geq 1 - \alpha.$ Despite its desirable statistical properties, it remains less clear how such sets should guide principled downstream decision making. Recent progress by \cite{kiyani2025decision} has advanced this connection by studying \emph{risk-averse decision makers}—agents who, upon observing \(X\) but not \(Y\), must choose an action \(a \in \mathcal{A}\) to maximize a utility function \(u: \mathcal{A} \times \mathcal{Y} \to \mathbb{R}\), while ensuring that low-utility outcomes occur with probability at most \(\alpha\). Specifically, given an action policy \(a: \mathcal{X} \to \mathcal{A}\) and a utility certificate \(\nu: \mathcal{X} \to \mathbb{R}\), they consider the problem of \emph{maximizing the expected certificate} \(\mathbb{E}[\nu(X)]\) subject to the \emph{safety constraint},
\begin{align}\label{marginal_safety}
    \mathbb{P}(u(a(X), Y) \geq \nu(X)) \geq 1 - \alpha.
\end{align}
This is a \emph{marginal safety guarantee}, meaning the agent is safe on average over the distribution of covariates—for example, on average across patients in a medical setting. \cite{kiyani2025decision} show that the optimal solution to this problem is achieved by first constructing a prediction set \(C(X)\) with valid coverage, and then acting according to the max-min decision rule: $ a(X) = \argmax_{a \in \mathcal{A}} \min_{y \in C(X)} u(a, y).$ This result establishes that conformal prediction sets are a sufficient representation of uncertainty for risk-averse agents seeking a marginal notion of safety~\eqref{marginal_safety}.

While marginal safety offers a useful baseline, \emph{it fails to guarantee that any particular decision is safe}—it only ensures that, on average across the population, the agent avoids low-utility outcomes. In safety-critical settings, this average guarantee can be dangerously misleading: a high-risk action might appear acceptable because its risks are averaged out by safer decisions elsewhere. From the perspective of a downstream decision maker, what matters is not marginal safety, but whether each potential action carries a reliable safety guarantee. For instance, in medical treatment planning, a decision policy may appear safe on average but fail to provide safety when high-risk actions like surgery are chosen, making action-conditional guarantees essential. This motivates the need for \emph{action-conditional safety}: a stronger requirement that ensures the decided action will lead to high utility with high probability, conditioned on that action being taken. The action-conditional guarantees are practically meaningful and essential for safe deployment in domains such as healthcare, where decisions must be justified not globally, but per-action. \rev{Throughout the paper, we focus on finite, discrete action spaces. This is the regime in which our finite-sample theory and algorithm are stated: AC-RAC calibrates an action-specific threshold vector $\boldsymbol\lambda=(\lambda_a)_{a\in\mathcal A}\in\mathbb R_+^{|\mathcal A|}$. This scope matches many of the decision-support settings motivating our work, such as diagnostic and treatment choices, while extensions to continuous or combinatorial action spaces require additional approximation or complexity-control arguments; we return to this point in Section~\ref{sec:discussion}.}


Formally, for an action policy \(a: \mathcal{X} \to \mathcal{A}\), a utility function \(u: \mathcal{A} \times \mathcal{Y} \to \mathbb{R}\), and a utility certificate \(\nu: \mathcal{X} \to \mathbb{R}\), we introduce the action-conditional safety constraint: $\forall a \in \mathcal{A}$,
\begin{align}\label{cation_conditional_safety}
    \mathbb{P}(u(a(X), Y) \geq \nu(X) \mid a(X) = a) \geq 1 - \alpha.
\end{align}
This constraint ensures that each action the agent may take leads, with high probability, to utility exceeding the certificate \(\nu(X)\) (larger certificates are desirable). It also draws a conceptual parallel to the \emph{multi-calibration} literature \citep{FV98,zhao2021calibrating,noarov2023high, kiyani2025robust}—particularly \emph{decision calibration}—which calibrates forecasts conditioned on the actions of downstream agents. However, such frameworks are tailored to \emph{risk-neutral} decision makers who aim to maximize expected utility, for whom calibrated probability estimates are the correct notion of uncertainty quantification \citep{FV98,zhao2021calibrating,noarov2023high, kiyani2025robust}. In contrast, the \emph{risk-averse} agents we study seek to avoid low-utility outcomes by optimizing quantiles of their utility distribution, and thus require \emph{high-probability guarantees}. Our work can be viewed as an analogous foundation to decision calibration, adapted to the risk-averse setting.

In this work, we study such risk-averse agents who aim to maximize their average utility certificate \(\mathbb{E}[\nu(X)]\), subject to satisfying the action-conditional safety guarantee in \eqref{cation_conditional_safety}. This corresponds to agents optimizing their \emph{action-conditional value-at-risk}—a formulation that captures the need for reliable utility under each possible decision. We will show that this objective admits a sharp characterization: the optimal decision policy can be implemented via conformal prediction sets that satisfy an \emph{action-conditional validity} property: for all $ a \in \mathcal{A}$,
\begin{align}
    \mathbb{P}\left( Y \in C(X) \mid \argmax_{a \in \mathcal{A}} \min_{y \in C(x)} u(a, y) = a \right) \geq 1 - \alpha.
\end{align}
That is, prediction sets with the right conditional coverage structure serve as surrogates for action-conditional risk-averse decision making.
Building on this line of thinking, we now detail our contributions to both sides of the story: decision making and conformal prediction.

\textbf{On the decision making side.} We prove that prediction sets with the right conditional coverage structure serve as surrogates for action-conditional risk-averse decision making. We establish a one-sided correspondence: optimizing action-conditional prediction sets yields feasible risk-averse policies. This strengthens the results of \cite{kiyani2025decision} and further solidifies the role of conformal prediction as a principled uncertainty quantification tool—even when conditional safety guarantees are required in downstream decisions. Moreover, we explicitly derive the \emph{optimal prediction sets} that achieve the best population-level performance under action-conditional constraints.

\textbf{On the conformal prediction side.} To translate this population-level characterization into a practical algorithm, we develop a novel \emph{finite-sample debiasing method} that ensures distribution-free action-conditional coverage guarantees—marking the first such result in the literature. Our calibration procedure builds on and extends the methodology of \cite{gibbs2025conformal}, by defining the novel notion of action specific non-conformity score. We also establish both \emph{distribution-free lower and upper bounds} for the action-conditional coverage of our method.

\textbf{Experimental Validation.} We empirically evaluate AC-RAC on real-world tasks such as medical diagnosis. Across all settings, AC-RAC achieves significantly lower miscoverage for each action class compared to baseline methods, while maintaining competitive utility. These results highlight the practical relevance of action-conditional safety in decision-sensitive environments. Code available at \url{https://github.com/Telvc/AC-RAC}.


    \subsection{Related Work}
    
    \paragraph{Conformal prediction.} The formal framework of conformal prediction (CP) was introduced by \citet{vovk1999machine, saunders1999transduction, vovk2005algorithmic}. CP has since become a widely adopted method for constructing prediction sets with finite-sample marginal validity guarantees (for instance look at \citet{shafer2008tutorial,angelopoulos2023conformal}). A large body of work has sought to strengthen the marginal guarantees of CP.  These include group-conditional coverage \citep{barber2023conformal, cauchois2021knowing, gibbs2025conformal, jung2022batch, vovk2005algorithmic},  localized guarantees \citep{guan2023localized, hore2023conformal}, label-conditional coverage \citep{ding2023class, vovk2005algorithmic}, selection-conditional coverage \citep{bao2024selective, jin2025confidence, gazin2025selecting, jin2023selection}, approaches for CP under treatment-based selection in counterfactual inference \citep{lei2021conformal, yin2024conformal, jin2023sensitivity}, robustness against covariate shifts \citep{tibshirani2019conformal, joshi2026conformal}, and set size efficiency \citep{sadinle2019least, kiyani2024length}.

    \paragraph{Decision calibration and multi-calibration.}
Calibrated probabilistic forecasts offer an alternative to conformal prediction (CP) for uncertainty quantification, particularly suited for risk-neutral decision-makers aiming to optimize expected utility. In this context, calibration ensures that predicted probabilities align with observed frequencies, enabling rational decisions under uncertainty. Recent work has introduced \emph{decision calibration}, which requires predictions to be indistinguishable from true outcomes for a collection of downstream decision rules \citep{foster1998asymptotic,kakade2004deterministic,zhao2021calibrating, roth2024forecasting, collina2024efficient,fisch2022calibrated, kiyani2025robust}. This line of work emphasizes minimizing regret across all bounded-complexity agents and has led to practical recalibration procedures for multiclass settings. In parallel, the literature on \emph{multi-calibration} has sought to guarantee calibrated predictions across overlapping subgroups, with the goal of achieving fairness and robustness \citep{zhao2021calibrating,garg2024oracle,globus2023multicalibrated,noarov2023scope,jung2021moment}. Our work can be viewed as a risk-averse analogue of this literature: while calibration provides guarantees for expected-utility-maximizing agents, we focus on risk-averse agents by leveraging conformal prediction to construct prediction sets that yield certified utility lower bounds for all actions.

\paragraph{Risk Control Methods.} A growing body of research extends conformal prediction (CP) to control broader classes of risk measures beyond coverage guarantees \citep{lindemann2023safe, angelopoulos2022conformal, angelopoulos2021learn, cortes2024decision, lekeufack2024conformal, zecchin2024adaptive, blot2024automatically, zecchin2024localized}. For instance, \cite{angelopoulos2022conformal} develop methods for conformal risk control tailored to monotone risk measures, while \cite{cortes2024decision} build prediction sets that simultaneously ensure coverage and manage risk. However, these approaches typically do not explicitly condition risk measures on the \emph{actions} selected, nor do they investigate how prediction sets can optimally inform action-specific decision-making processes. Similarly, while \cite{lindemann2023safe} and \cite{lekeufack2024conformal} apply CP within predefined, low-dimensional action policy spaces, they do not address the joint optimization of policy design and uncertainty quantification necessary for action-conditional guarantees. Exploring the combination of their setups with our action-conditional frameworks could thus lead to significantly stronger risk guarantees—a promising direction for future research.

Our work introduces a new notion: action-conditional coverage, which is fundamentally distinct. While action-conditional guarantees share similarities with group-conditional CP—specifically, in that they target coverage over groups shaped by the action policy—the key difference is that these groups are not predefined. They emerge dynamically from the decision policy, which itself depends on the prediction sets. This interdependence creates a feedback loop that breaks the usual predefine-then-calibrate pipeline and necessitates new algorithmic tools to ensure valid coverage across all actions. Similarly, selection-conditional methods address inference after a selection event in test time (e.g., coverage conditioned on the test points with small set size). However, our setting is richer: we require simultaneous coverage across all actions, not just for selected instances. Even in the binary action case (e.g., select vs. not-select), selection-conditional methods typically provide guarantees only for selected points, whereas we demand coverage for both selected and non-selected cases. Also beyond our contribution to conditional methods for debiasing sets, we also show that action-conditional prediction sets act as surrogates for risk-averse agents optimizing action-conditional value at risk, linking conditional CP with finer grained safety guarantees for decision making. For further discussion on related works, particularly on the risk averse decision making side of our contributions please see Appendix \ref{sec:relat}.

\section{Action-Conditional Conformal Risk Averse Decision Making}\label{sec:prob}

In this section, we will formulate the problem of risk-averse decision-making with action-conditional guarantees. We are given a space of features $\mathcal{X}$, a set of labels $\mathcal{Y}$, and a set of possible actions $\mathcal{A}$. We assume that $(x, y) \in \mathcal{X} \times \mathcal{Y}$ is drawn from an unknown joint distribution $\mathcal{D}$. After observing a feature $x$, the decision maker selects an action $a$. 
Note that the decision maker does not observe the true label $y$. However, the utility of
the decision maker will depend on both the chosen action $a$ and the label $y$, and is captured by a given
utility function
 $u: \mathcal{A} \times \mathcal{Y} \rightarrow \mathbb{R}$.

The core principle behind risk-averse optimization is to prioritize reliability over average performance. That is, rather than selecting actions that maximize expected utility, the agent prefers actions that avoid catastrophic low-utility outcomes with high probability. This is particularly important in safety-critical settings such as healthcare or finance, where occasional failures can be unacceptable. In the classical risk-averse setting, the agent aims to guarantee that the utility $u(a, Y)$ is high with probability at least $1 - \alpha$. For a pre-specified risk tolerance level $\alpha \in (0,1)$, the $\alpha$-level quantile of the utility for each action $a \in \mathcal{A}$ is defined as
\[
\nu_\alpha(a; x) = \text{quantile}_\alpha\left( u(a, Y) \mid X = x \right).
\]
A risk-averse decision maker then selects the action that maximizes this guaranteed utility level—that is, the action whose worst-case performance, up to level $1 - \alpha$, is as high as possible. This leads to conservative decisions that avoid low-utility outcomes with high confidence.

This formulation requires full knowledge of the conditional distribution $p(y \mid x)$, which is typically inaccessible. To address this, \cite{kiyani2025decision} introduce a marginal relaxation called RA-DPO (Risk-Averse Decision Policy Optimization), where the quantile constraint is enforced over the entire data distribution:
\begin{equation}
\label{eq:ra-marginal}
\begin{aligned}
\max_{a(\cdot), \nu(\cdot)} \quad & \mathbb{E}_X[\nu(X)], \\
\mathrm{s.t.} \quad & \mathbb{P}(u(a(X), Y) \ge \nu(X)) \ge 1 - \alpha,
\end{aligned}
\end{equation}
where $a: \mathcal{X} \rightarrow \mathcal{A}$ is the policy and $\nu: \mathcal{X} \rightarrow \mathbb{R}$ is the associated utility certificate. This marginal formulation provides a statistically feasible proxy for pointwise quantile guarantees.

In this paper, we propose a stronger notion of risk-aversion: one that is \textit{action-conditional}. That is, the utility guarantee must hold within each subpopulation of the data defined by the action taken. This is essential in applications where actions have distinct operational meanings (e.g., treatment decisions), and fairness or reliability must be certified at the level of each action.

Formally, we impose the requirement that for each action $a \in \mathcal{A}$, the agent achieves a high-utility outcome with probability at least $1 - \alpha$ conditional on selecting that action. This leads to the following optimization problem, termed as \emph{Action-Conditional Decision Policy Optimization} (AC-DPO):
\begin{equation}
\label{eq:original}
\boxed{
\begin{aligned}
\max_{a(\cdot), \nu(\cdot)} \quad & \mathbb{E}_X[\nu(X)], \\
\mathrm{s.t.} \quad & \mathbb{P}(u(a(X), Y) \ge \nu(X) \mid a(X) = a) \ge 1 - \alpha,
\end{aligned}}
\end{equation}
where the constraint holds \textbf{for all} $a \in \mathcal{A}$. Here, recall that $u$ denotes the utility function, $a: \mathcal{X} \rightarrow \mathcal{A}$ is the decision policy, $\nu: \mathcal{X} \rightarrow \mathbb{R}$ is the quantile function, and $\alpha$ is the risk tolerance level. This formulation strengthens the guarantee in \eqref{eq:ra-marginal} by ensuring calibrated performance across all decision branches, not only in aggregate. 
We refer to \eqref{eq:original} as the action-conditional counterpart to RA-DPO. 

\subsection{Constructing Action-Conditional Policies from Prediction Sets}
Prediction sets have emerged as a central tool in uncertainty quantification due to their ability to provide distribution-free, model-agnostic coverage guarantees. Recent work by \cite{kiyani2025decision} has established a fundamental connection between prediction sets and risk-averse decision making. In particular, it is shown that that any optimal decision policy under marginal safety constraints can be equivalently realized through a max-min decision rule over appropriately constructed prediction sets.
Formally, the solution of RA-DPO \eqref{eq:ra-marginal} is equivalent to the solution of a prediction-set-based optimization problem called Risk-Averse Conformal Prediction Optimization (RA-CPO),
\begin{align*}
\max_{C(\cdot)} \quad & \mathbb{E}_X\left[ \max_{a \in \mathcal{A}} \min_{y \in C(X)} u(a, y) \right] \\
\text{s.t.} \quad & \mathbb{P}(Y \in C(X)) \geq 1 - \alpha.
\end{align*}
Using an optimal solution $C(x)$ of this problem, we can find the optimal policy for \eqref{eq:ra-marginal} as
\begin{equation}\label{eq:acdef}
    \begin{aligned}
        a_{\mathrm{RA}}^C(x) &= \argmax_{a \in \mathcal{A}} \min_{y \in C(x)} u(a, y),\\
\nu_{\mathrm{RA}}^C(x) &= \max_{a \in \mathcal{A}} \min_{y \in C(x)} u(a, y).
    \end{aligned}
\end{equation}
This equivalence shows that prediction sets communicate uncertainty effectively and serve as a complete representation for optimizing risk-averse utility under marginal coverage constraints.

We now generalize this result to the \emph{action-conditional} setting, where safety must hold \emph{conditionally} on each action being taken. Specifically, we consider the \emph{Action-Conditional Risk-Averse Conformal Prediction Optimization} (AC-CPO):
\begin{equation}
\label{eq:conformal}
\boxed{
\begin{aligned}
\max_{C(\cdot)} \quad & \mathbb{E}_X\left[ \max_{a \in \mathcal{A}} \min_{y \in C(X)} u(a, y) \right] \\
\text{s.t.} \quad & \mathbb{P}\left( Y \in C(X) \mid a_{\mathrm{RA}}^C(X) = a \right) \geq 1 - \alpha, \quad \forall a \in \mathcal{A}.
\end{aligned}}
\end{equation}
We next show that this action-conditional generalization leads to a feasible policy optimization formulation (defined in \eqref{eq:original}). That is, prediction sets constitute a relaxed representation for risk-averse decision making when conditioning on individual actions.

\begin{theorem}[From AC-CPO to AC-DPO]\label{thm:equival}
From any optimal solution $C^*$ to \eqref{eq:conformal}, we can construct a feasible solution $(a_{\mathrm{RA}}^{C^*}(x),\nu_{\mathrm{RA}}^{C^*}(x))$ for \eqref{eq:original} such that it follows the action-conditional constraints and
\[
\mathbb{E}_X\left[\nu_{\mathrm{RA}}^{C^*}(X)\right] = \mathbb{E}_X\left[ \max_{a \in \mathcal{A}} \min_{y \in C^*(X)} u(a, y) \right].
\]
\end{theorem}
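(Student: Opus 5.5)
The plan is a direct construction followed by a set-inclusion argument. Take the optimal $C^*$ of \eqref{eq:conformal} and define the policy and certificate exactly as in \eqref{eq:acdef}: $a^*(x)=a_{\mathrm{RA}}^{C^*}(x)=\argmax_{a\in\mathcal A}\min_{y\in C^*(x)}u(a,y)$ and $\nu^*(x)=\nu_{\mathrm{RA}}^{C^*}(x)=\max_{a\in\mathcal A}\min_{y\in C^*(x)}u(a,y)$. Ties in the $\argmax$ are broken by a fixed deterministic rule, say a fixed ordering of the finite set $\mathcal A$. The same rule must be used inside the constraint of \eqref{eq:conformal}, so that the conditioning event $\{a_{\mathrm{RA}}^{C^*}(X)=a\}$ there is literally the event $\{a^*(X)=a\}$.

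The objective identity needs no argument. By construction $\nu^*(X)=\max_{a}\min_{y\in C^*(X)}u(a,y)$ pointwise, so the two expectations coincide once the quantity is integrable or at least has a well-defined expectation; I will assume this, as the statement of \eqref{eq:conformal} implicitly does. One remark is needed for empty sets: if $C^*(x)=\emptyset$, the inner minimum is $+\infty$. Then $Y\in C^*(X)$ fails, so this case only affects the objective and is harmless for feasibility below. Alternatively, one can observe that an optimal $C^*$ with finite objective has nonempty sets almost surely.

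The key step is feasibility for \eqref{eq:original}. I will show the pointwise event inclusion
\[
\{Y\in C^*(X)\}\subseteq\{u(a^*(X),Y)\ge \nu^*(X)\}.
\]
Indeed, if $Y\in C^*(X)$, then
\[
u(a^*(X),Y)\ge\min_{y\in C^*(X)}u(a^*(X),y)=\max_{a}\min_{y\in C^*(X)}u(a,y)=\nu^*(X),
\]
where the equality holds because $a^*(X)$ attains the max. The max is attained because $\mathcal A$ is finite. The min over $C^*(X)$ should be read as an infimum if $C^*(X)$ is infinite, and the inequality still holds. Intersecting both sides with $\{a^*(X)=a\}$ and using monotonicity of conditional probability gives, for every $a$ with $\mathbb P(a^*(X)=a)>0$,
\[
\mathbb P\bigl(u(a^*(X),Y)\ge\nu^*(X)\mid a^*(X)=a\bigr)\ge \mathbb P\bigl(Y\in C^*(X)\mid a^*(X)=a\bigr)\ge 1-\alpha,
\]
by the constraint of \eqref{eq:conformal}. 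Actions with zero probability impose a vacuous constraint under the usual convention.

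I expect the main obstacle to be bookkeeping rather than mathematics: making the conditioning events in the two problems genuinely identical. This requires a consistent, measurable tie-breaking rule, so that $a^*$ is a measurable policy and the conditional probabilities are defined. It also requires handling infima and empty sets. Optimality of $C^*$ is never used, so the argument in fact shows that every feasible $C$ yields a feasible pair with the same objective value. This is why the theorem only gives the one-sided relation (AC-CPO value $\le$ AC-DPO value), as the text announces.
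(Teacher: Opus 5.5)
Your proposal is correct and follows the paper's proof essentially verbatim: the same max-min construction of $(a^*,\nu^*)$ with a fixed deterministic tie-breaking rule, the same inclusion $\{Y\in C^*(X)\}\subseteq\{u(a^*(X),Y)\ge \nu^*(X)\}$ intersected with $\{a^*(X)=a\}$, and the objective identity holding by definition. Like the paper, you note that only feasibility of $C^*$ is used, and your extra remarks on empty sets, infima, measurability, and zero-probability actions are refinements that the paper leaves implicit.
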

We refer the reader to Appendix~\ref{sec:pfequival} for the proof. In contrast to the two-way equivalence shown by \citet{kiyani2025decision}, our action-conditional setting yields only a one-directional guarantee: an optimal AC-CPO solution induces a feasible AC-DPO policy with equal value, whereas the converse need not hold. This asymmetry arises because, under action-conditional constraints, the conditioning event in AC-CPO $\{a_{\mathrm{RA}}^C(X) = a\}$ is endogenous to the prediction sets but the event in AC-DPO $\{a(X) = a\}$ is free. Despite this relaxation, our experiments (Section~\ref{sec:exp}) show that the resulting policies preserve high utility while delivering valid action-conditional safety. Accordingly, we use AC-CPO as a principled surrogate that preserves conditional guarantees and provides a constructive route to implementable policies. \rev{A simple example illustrates where the surrogate can be conservative. Suppose a safe default action is moderately good across all labels, whereas a specialized action is optimal only on a small subset of easy inputs and performs poorly elsewhere. A free AC-DPO policy can select the specialized action only on those easy inputs. In contrast, AC-CPO must certify coverage on the subpopulation induced by the conformal set and the max-min rule itself, so points near the boundary between the specialized and default actions may force larger sets or fallback to the default action. This is the precise feedback loop that makes the action-conditional problem harder than the marginal case.}

    \section{Reparameterized Construction of Prediction Sets}\label{sec:optset}

In this section, we study the population-level conformal prediction problem with action-conditional guarantees, as defined in \eqref{eq:conformal}. Our goal is to characterize the prediction set function \( C(x) \) that maximizes risk-averse utility while satisfying per-action coverage constraints. To this end, we introduce a reparameterization and derive a dual formulation using tools from convex duality. See Theorem~\ref{thm:dual} for the final characterization. The principles developed here will serve as the foundation for designing a finite-sample method in the next section. All technical proofs are deferred to Appendix~\ref{sec:pfoptset}. We begin by defining two quantities that encapsulate the risk-averse utility structure:
\begin{equation}\label{eq:keyfun}
\begin{aligned}
    \theta(x, t) &= \max_{a \in \mathcal{A}} \text{quantile}_{1 - t}\left[ u(a, Y) \mid X = x \right],\\
    a(x, t) &= \argmax_{a \in \mathcal{A}} \text{quantile}_{1 - t}\left[ u(a, Y) \mid X = x \right].
\end{aligned}
\end{equation}
Here, \( \theta(x, t) \) denotes the maximum achievable risk-averse utility at coverage level \( t \in(0,1)\), and \( a(x, t) \) denotes the action that attains this utility. Intuitively, we aim to assign a feature-dependent coverage threshold \( t(x) \in [0,1] \), such that the resulting prediction set \( C(x) \) satisfies action-conditional coverage and induces high utility. To this end, we reparameterize the prediction set problem \eqref{eq:conformal} in terms of the pointwise coverage function \( t(x) = \mathbb{P}(Y \in C(X) \mid X = x) \). We show that \eqref{eq:conformal} can be relaxed to the following optimization problem over coverage functions:
\begin{equation}\label{eq:repara}
\boxed{
\begin{aligned}
\max_{t: \mathcal{X} \rightarrow [0,1]} \quad & \mathbb{E}_X \left[ \theta(X, t(X)) \right], \\
\text{s.t.} \quad & \mathbb{E}\left[ t(X) \mid a(X, t(X)) = a \right] \ge 1 - \alpha, \quad \forall a \in \mathcal{A}.
\end{aligned}}
\end{equation}

The next proposition makes this relaxation precise and gives a closed-form expression for the prediction sets:
\begin{proposition}[Set Characterization]\label{thm:linrep}
From any optimal solution \( t^*(x) \) to \eqref{eq:repara}, we obtain a feasible coverage function \( C^*(x) \) for \eqref{eq:conformal} such that
\[
\mathbb{E}_X \left[ \max_{a \in \mathcal{A}} \min_{y \in C^*(X)} u(a, y) \right] = \mathbb{E}_X \left[ \theta(X, t^*(X)) \right].
\]
The prediction set and decision policy are given by:
\begin{equation}\label{eq:optset}
\begin{aligned}
    C^*(x) &= \left\{ y \in \mathcal{Y} : u(a(x, t^*(x)), y) \ge \theta(x, t^*(x)) \right\},\\
a^*(x) &= a(x, t^*(x)).
\end{aligned}
\end{equation}
Moreover, this prediction set satisfies \( t^*(x) = \mathbb{P}(Y \in C^*(X) \mid X = x) \).
\end{proposition}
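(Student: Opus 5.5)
The plan is to verify three facts about the explicit set $C^*$ in \eqref{eq:optset}, in this order: the pointwise coverage identity $\mathbb{P}(Y\in C^*(X)\mid X=x)=t^*(x)$; the identity $\max_a\min_{y\in C^*(x)}u(a,y)=\theta(x,t^*(x))$ together with $a_{\mathrm{RA}}^{C^*}(x)=a(x,t^*(x))$; and feasibility for \eqref{eq:conformal}. Throughout we write $a^\star:=a(x,t^*(x))$ and $q_a(s):=\mathrm{quantile}_{s}[u(a,Y)\mid X=x]$, so that $\theta(x,t)=\max_a q_a(1-t)$.

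\textbf{Coverage.} By definition, $C^*(x)=\{y: u(a^\star,y)\ge q_{a^\star}(1-t^*(x))\}$. Hence $\mathbb{P}(Y\in C^*(X)\mid X=x)=\mathbb{P}(u(a^\star,Y)\ge q_{a^\star}(1-t^*(x))\mid X=x)$. With the usual left-continuous quantile convention, this probability is at least $t^*(x)$. It equals $t^*(x)$ exactly when the conditional law of $u(a^\star,Y)$ has no atom at that quantile. I would therefore impose the regularity assumption, standard in this line of work following \cite{kiyani2025decision}, that the conditional distributions of $u(a,Y)$ given $X=x$ are continuous. If atoms occur, I would handle them by randomized tie-breaking, i.e.\ by including the boundary level set with the appropriate probability. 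This yields the ``moreover'' statement.

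\textbf{Value and induced action.} For $y\in C^*(x)$ we have $u(a^\star,y)\ge\theta(x,t^*(x))$, so $\min_{y\in C^*(x)}u(a^\star,y)\ge\theta(x,t^*(x))$, which gives $\max_a\min_{C^*}\ge\theta$. The reverse inequality is the key step. For any action $a$, the set $C^*(x)$ has conditional probability $t^*(x)$. The minimum of $u(a,\cdot)$ over any set of conditional mass $t$ is at most $q_a(1-t)$: otherwise $\mathbb{P}(u(a,Y)>q_a(1-t))\ge t$ would hold, contradicting the quantile definition under continuity, and only the essential minimum matters. Thus $\min_{C^*}u(a,\cdot)\le q_a(1-t^*)\le\theta(x,t^*)$ for every $a$. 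It follows that the max-min value equals $\theta(x,t^*(x))$, attained at $a^\star$. With a consistent tie-breaking rule (the same one used to define $a(x,t)$), we get $a_{\mathrm{RA}}^{C^*}(x)=a(x,t^*(x))=a^*(x)$. Taking expectations gives the claimed equality of objectives.

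\textbf{Feasibility.} Because the induced actions coincide, the events $\{a_{\mathrm{RA}}^{C^*}(X)=a\}$ and $\{a(X,t^*(X))=a\}$ are identical. Conditioning on this event and using the tower property with the coverage identity gives
\[
\mathbb{P}(Y\in C^*(X)\mid a_{\mathrm{RA}}^{C^*}(X)=a)=\mathbb{E}[t^*(X)\mid a(X,t^*(X))=a]\ge 1-\alpha,
\]
where the inequality is the constraint of \eqref{eq:repara}. Tower is valid here because the conditioning event is $X$-measurable. Optimality of $t^*$ is not actually needed; feasibility suffices.

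I expect the main obstacle to be the reverse inequality in the value step, together with its tie-breaking and atom issues. This step needs the essential-minimum version of ``$\min$ over a set of mass $t$ is at most the $(1-t)$-quantile.'' It also needs the argmax tie-breaking to agree between $a(x,t)$ and the max-min rule, so that the conditioning events match exactly. I would state the continuity and tie-breaking conventions explicitly at the start of the proof.
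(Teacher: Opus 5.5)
Your outline is the paper's: the easy direction from the construction of $C^*$, the reverse inequality ``a set of conditional mass at least $t$ cannot have minimal $u(a,\cdot)$ above the $(1-t)$-quantile'', action coincidence under consistent tie-breaking, and the tower-rule feasibility step. You are right that this last step uses only feasibility of $t^*$. The gap is in the reverse inequality you flagged, and it comes from your quantile convention.

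\textbf{Why the reverse inequality fails as argued.} With the left-continuous (lower) quantile $q_a(1-t)=\inf\{z:\mathbb{P}(u(a,Y)\le z\mid X=x)\ge 1-t\}$, suppose $m:=\min_{y\in C^*(x)}u(a,y)>q_a(1-t)$. You correctly get $\mathbb{P}(u(a,Y)>q_a(1-t)\mid X=x)\ge t$. But the lower quantile only gives $\mathbb{P}(u(a,Y)>q_a(1-t)\mid X=x)\le t$, so equality is allowed and there is no contradiction. Atomlessness does not exclude equality, because the law of $u(a,Y)$ can have a gap in its support just above $q_a(1-t)$.

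\textbf{Counterexample.} Take $Y\sim\mathrm{Unif}[0,1]$ independent of $X$, with $u(a_1,y)=y$, $u(a_2,y)=y/2$ for $y<1/2$, and $u(a_2,y)=10+y$ for $y\ge 1/2$. Both utility laws are atomless. At $t=1/2$ the lower quantiles are $1/2$ and $1/4$, so $a(x,1/2)=a_1$, $\theta(x,1/2)=1/2$, and $C^*(x)=\{y\ge 1/2\}$. Yet $\min_{y\in C^*(x)}u(a_2,y)=10.5$, so the max-min value is $10.5\neq\theta$ and the induced action is $a_2$. With $\alpha=1/2$, one checks that $t^*\equiv 1/2$ is actually optimal for \eqref{eq:repara} under this convention: every $t<1/2$ selects $a_2$ and violates its constraint. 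So the statement itself fails under your convention, and no continuity hypothesis can rescue the step.

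\textbf{The fix.} The missing ingredient is the convention the paper fixes explicitly, the upper quantile $\mathrm{quantile}_{1-t}[u(a,Y)\mid X=x]=\sup\{z:\mathbb{P}(u(a,Y)\ge z\mid X=x)\ge t\}$. With it, the reverse inequality needs neither continuity nor exact mass. Since $\{Y\in C^*(x)\}\subseteq\{u(a,Y)\ge m\}$, you get $\mathbb{P}(u(a,Y)\ge m\mid X=x)\ge\mathbb{P}(Y\in C^*(x)\mid X=x)\ge t^*(x)$. So $m$ lies in the set whose supremum defines the quantile, and $m\le\mathrm{quantile}_{1-t^*(x)}[u(a,Y)\mid X=x]\le\theta(x,t^*(x))$. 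Keep the event $\{u(a,Y)\ge m\}$: weakening it to $\{u(a,Y)>q_a(1-t)\}$, as you did, loses the contradiction under either convention. In the example above, the upper quantile of $u(a_2,Y)$ at $t=1/2$ is $10.5$, so $a^\star=a_2$ and everything is consistent.

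\textbf{What continuity is actually needed for.} The one-sided bound $\mathbb{P}(Y\in C^*(x)\mid X=x)\ge t^*(x)$ holds for the deterministic set in \eqref{eq:optset}. This is because $z\mapsto\mathbb{P}(u(a,Y)\ge z\mid X=x)$ is left-continuous, so the supremum is attained. Hence the value identity, the action coincidence, and feasibility all follow from this bound alone. Continuity or randomized boundary tie-breaking is needed only for the ``moreover'' equality, which is exactly how the paper's remark treats it. This matters because the paper's applications have finite $\mathcal{Y}$. There your blanket continuity assumption never holds, and randomizing the boundary would replace the set in \eqref{eq:optset} with a different one.

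The rest of your plan is fine, including the tie-breaking argument for $a_{\mathrm{RA}}^{C^*}=a^*$, which is more explicit than the paper's.
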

This reparameterization serves as a conceptual bridge between the prediction set formulation and the risk-averse decision policy. Once the optimal function \( t^*(x) \) is obtained, we can construct both the feasible prediction sets and the associated actions via \eqref{eq:optset}. We now turn to solving \eqref{eq:repara}. Unlike the marginal setting in \cite{kiyani2025decision}, where the coverage constraint is convex and decoupled from the decision rule, the action-conditional formulation introduces new complexity: the conditioning event \( a(X, t(X)) = a \) depends nonlinearly on the decision variable \( t \). This dependence breaks convexity and prevents direct application of previous techniques. We begin by rewriting the action-conditional constraint in \eqref{eq:repara} using the identity
\[
\mathbb{E}_X [t(X)\mathbbm{1}_a(X, t(X))] \ge (1 - \alpha) \mathbb{P}(a(X, t(X)) = a),
\]
where \( \mathbbm{1}_a(x, t) = \mathbbm{1}[a(x, t) = a] \) denotes the indicator that action \( a \) is selected under threshold \( t \). This reformulation casts both sides of the constraint as expectations, enabling a more linear treatment of the dependence on \( t \) and the decision rule \( a(x, t) \). 

A key challenge, however, is that the indicator function \( \mathbbm{1}_a(x, t) \) introduces a discontinuous dependence on \( t \), leading to nonconvexity of the problem. We handle this difficulty by developing new techniques detailed in Appendix~\ref{sec:pfdual}, drawing on tools from duality theory. To proceed, we define the threshold function \( t^*(x,\boldsymbol{\lambda}) \) using a family of parameters \( \boldsymbol{\lambda} = (\lambda_a)_{a \in \mathcal{A}} \in \mathbb{R}_{\ge 0}^{|\mathcal{A}|} \), one for each action
\begin{equation}\label{eq:dual-threshold}
t^*(x,\boldsymbol{\lambda}) = \argmax_{t \in [0,1]} \left\{ \theta(x, t) + \sum_{a \in \mathcal{A}} \lambda_a (t - (1 - \alpha)) \mathbbm{1}_a(x, t) \right\}.
\end{equation}
This expression captures the tradeoff between utility and action-conditional coverage through a tunable vector of \( |\mathcal{A}| \)-dimensional parameters $\boldsymbol{\lambda}$. The next theorem proves that this formulation can fully describe the optimal solution to the problem \eqref{eq:repara}.

\begin{theorem}[Strong Duality]\label{thm:dual}
Assume the marginal distribution \( \mathcal{P}_X \) is continuous. Then strong duality holds for the reparameterized problem \eqref{eq:repara}. In particular, there exists \( \boldsymbol{\lambda}^* \in \mathbb{R}_{\ge 0}^{|\mathcal{A}|} \) such that
\[
t_{\mathrm{opt}}(x) = t^*(x,\boldsymbol{\lambda}^*)
\]
is the optimal solution to \eqref{eq:repara}. Moreover, the dual minimizer is given by:
\[\boldsymbol{\lambda}^* = \argmin_{\boldsymbol{\lambda} \in \mathbb{R}_{\ge 0}^{|\mathcal{A}|}} \Psi(\boldsymbol{\lambda}),\quad \text{where} \quad \Psi(\boldsymbol{\lambda}) = \]\vspace{-1.5em}
\[\mathbb{E}_X \left[ \max_{t \in [0,1]} \left\{ \theta(X, t) + \sum_{a \in \mathcal{A}} \lambda_a (t - (1 - \alpha)) \mathbbm{1}_a(X, t) \right\} \right]\]
\end{theorem}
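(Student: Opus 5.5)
The plan is to run a Lagrangian duality argument for the nonconvex problem \eqref{eq:repara}. The convexity it lacks will be recovered from the continuity of $\mathcal{P}_X$ through a Lyapunov-type convexification.

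\textbf{Lagrangian and weak duality.} Write the constraints in the linear-in-expectation form
\[
g_a(t) := \mathbb{E}_X\bigl[(t(X)-(1-\alpha))\mathbbm{1}_a(X,t(X))\bigr] \ge 0, \qquad a\in\mathcal{A}.
\]
The Lagrangian is
\[
L(t,\boldsymbol{\lambda}) = \mathbb{E}_X\Bigl[\theta(X,t(X)) + \sum_a \lambda_a (t(X)-(1-\alpha))\mathbbm{1}_a(X,t(X))\Bigr].
\]
The integrand depends on $t$ only through the pointwise value $t(x)$. Hence $\sup_t L(t,\boldsymbol{\lambda})$ is attained by pointwise maximization, which gives $t^*(x,\boldsymbol{\lambda})$ from \eqref{eq:dual-threshold}, and the dual function is exactly $\Psi(\boldsymbol{\lambda})$. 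A measurable selection of the argmax is needed here. We will assume $\theta(x,\cdot)$ is upper semicontinuous, or else work with $\varepsilon$-maximizers. Weak duality $\mathrm{OPT}\le \inf_{\boldsymbol{\lambda}\ge 0}\Psi(\boldsymbol{\lambda})$ is then immediate.

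\textbf{Strong duality via convexification.} Consider the image set
\[
\mathcal{S} = \Bigl\{\bigl(\mathbb{E}[\theta(X,t(X))],\,(g_a(t))_{a\in\mathcal{A}}\bigr) : t:\mathcal{X}\to[0,1] \text{ measurable}\Bigr\} \subset \mathbb{R}^{1+|\mathcal{A}|}.
\]
This set is convex. To see this, take $t_1,t_2$ and $\gamma\in[0,1]$. Since $\mathcal{P}_X$ is atomless, Lyapunov's convexity theorem applied to the finite vector measure
\[
B\mapsto \bigl(\mathbb{E}[\phi_i(X,t_i(X))\mathbbm{1}_B(X)]\bigr)_{i,\phi}
\]
gives a set $B$ with the following property: for each of the $1+|\mathcal{A}|$ integrands $\phi$ and each $i$, $\mathbb{E}[\phi(X,t_1(X))\mathbbm{1}_B]=\gamma\,\mathbb{E}[\phi(X,t_1(X))]$, and similarly for $t_2$ on $B^c$ with weight $1-\gamma$. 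The spliced function $t=t_1\mathbbm{1}_B+t_2\mathbbm{1}_{B^c}$ is again admissible and realizes the convex combination. This works because every integrand is pointwise in $t(x)$.

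With $\mathcal{S}$ convex, the standard separating-hyperplane argument applies to the perturbation function $p(\mathbf{c})=\sup\{\mathbb{E}\theta : g_a(t)\ge c_a\ \forall a\}$, which is concave. It yields $\boldsymbol{\lambda}^*\ge 0$ with $\mathrm{OPT}=\Psi(\boldsymbol{\lambda}^*)=\min\Psi$, provided a constraint qualification holds. For Slater we take $t\equiv 1$, which gives $g_a(1)=\alpha\,\mathbb{P}(a(X,1)=a)\ge 0$. This is strict only for actions that are selected with positive probability.

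\textbf{Recovering the primal optimum.} The final step is to show that $t^*(\cdot,\boldsymbol{\lambda}^*)$ is optimal. Let $t_{\mathrm{opt}}$ be a primal optimizer, or a maximizing sequence combined with compactness from convexity of $\mathcal{S}$. Then
\[
\Psi(\boldsymbol{\lambda}^*) = \mathrm{OPT} \le L(t_{\mathrm{opt}},\boldsymbol{\lambda}^*) \le \Psi(\boldsymbol{\lambda}^*),
\]
so $t_{\mathrm{opt}}$ maximizes the integrand pointwise a.e. and complementary slackness $\lambda_a^* g_a(t_{\mathrm{opt}})=0$ holds. Therefore $t_{\mathrm{opt}}(x)\in\argmax$ in \eqref{eq:dual-threshold} a.e., and $t_{\mathrm{opt}}(x)=t^*(x,\boldsymbol{\lambda}^*)$ whenever the argmax is a singleton. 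Where it is not unique, the tie-broken selection may have to be randomized or chosen to match, and atomlessness again lets a deterministic splitting of $\mathcal{X}$ realize it.

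\textbf{Main obstacle.} The hardest part will be the convexity of $\mathcal{S}$ together with the constraint qualification. Actions that are never optimal make Slater fail in the degenerate directions. I would handle them by restricting the dual to actions with $\mathbb{P}(a(X,t(X))=a)>0$ for some feasible $t$, or by setting $\lambda_a^*=0$ for them. I would also handle attainment of the primal supremum by a weak-$*$ compactness argument on the closure of $\mathcal{S}$.
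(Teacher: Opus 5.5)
Your route is genuinely different from the paper's. The paper never works with the image of deterministic $t$. It writes $t(x)$ as a Dirac kernel and relaxes to arbitrary probability kernels in \eqref{eq:linrelax}. It then uses Lemma~\ref{thm:subprob} to show that the Lagrangian's supremum over kernels is attained at Dirac kernels, so the relaxed and unrelaxed problems share the dual function $\Psi$. Finally it asserts strong duality with dual attainment by ``standard LP duality for measure-valued LPs'', citing only feasibility of $\delta\equiv\delta_1$ and boundedness. Continuity of $\mathcal{P}_X$ is never used there. The announced absence of a relaxation gap is also not proved: equal dual functions give only weak duality for \eqref{eq:repara}. A relaxed optimum that randomizes between several maximizers at the same $x$ is never shown to be realizable by a deterministic $t$. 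Your Lyapunov splicing supplies exactly this missing purification step, and it is where atomlessness belongs. It also reduces the problem to a separating hyperplane in $\mathbb{R}^{1+|\mathcal{A}|}$ rather than an unspecified infinite-dimensional LP theorem. The cost is measurability bookkeeping (joint measurability of $\theta$ and $a(\cdot,\cdot)$, measurable argmax selections), which the paper's route needs too but does not mention.

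The genuine gap is the constraint qualification, which you flag but do not close.

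\emph{The repair that works.} Taking $t\equiv 1$ only certifies actions chosen at level $1$. Instead, use atomlessness to carve disjoint positive-measure pieces, one for each action that is the argmax at some level $t>1-\alpha$ on a set of positive measure. On each piece, select that action at such a level, and set $t=1$ elsewhere. Then every integrand $(t(x)-(1-\alpha))\mathbbm{1}_a(x,t(x))$ is nonnegative, and $g_a>0$ for all those actions.

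\emph{Where it fails.} An action that is the argmax only at levels $t\le 1-\alpha$ has $g_a(t)\le 0$ for every $t$, so no strictly feasible point exists in that coordinate. A feasible $t$ can still select it at level exactly $1-\alpha$, so your restriction need not discard it. Setting $\lambda_a^*=0$ does not help either.

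\emph{No finite multiplier need exist.} Take two actions $a,b$, with $X$ uniform on $[0,1]$ and independent of $Y$. Let $u(b,\cdot)\equiv c$, and let $u(a,Y)$ be continuous with $\mathrm{quantile}_{1-t}[u(a,Y)]=c+\sqrt{(1-\alpha)-t}$ for $t\in[1-\alpha-s_0,\,1-\alpha]$ and $<c$ for $t>1-\alpha$. Then $a$ is the argmax only at levels $t\le 1-\alpha$. Feasibility forces $t(X)\ge 1-\alpha$ almost surely, so the optimal value is $c$. Yet plugging $t=(1-\alpha)-s$ into $\Psi$ gives $\Psi(\boldsymbol\lambda)\ge c+\sup_{0<s\le s_0}(\sqrt{s}-\lambda_a s)>c$ for every finite $\boldsymbol\lambda$. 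Meanwhile $\Psi\to c$ as $\lambda_a\to\infty$ with $\lambda_b=0$. So there is no duality gap but also no dual minimizer.

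Existence of $\boldsymbol\lambda^*$ therefore needs an extra Slater-type hypothesis. One example: every action selected with positive probability for some $t$ is the argmax at some level above $1-\alpha$ on a set of positive $\mathcal{P}_X$-measure. Neither your plan nor the paper's appeal to LP duality supplies such a hypothesis.

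Separately, ``compactness from convexity of $\mathcal{S}$'' is not a valid step, because convexity gives no compactness. Since $\mathbbm{1}_a(x,t)$ is discontinuous in $t$, a limit point of $\mathcal{S}$ need not be realized by any $t$. The primal attainment your sandwich argument relies on must therefore be justified separately.
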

This result provides a tractable characterization of problem \eqref{eq:repara}  with action-dependent constraints. 

\section{Action-Conditional Risk Averse Calibration}\label{sec:algo}

    In this section, we consider the risk-averse optimization problem in the finite-sample setting. Suppose that we have access to a set of calibration samples $\{(x_i,y_i)\}_{i\in[n]}$ and a predictive model $f:\mathcal{X}\rightarrow\Delta(\mathcal{Y})$, which assigns each $x\in\mathcal{X}$ to a probability vector in $\Delta(\mathcal{Y})$. We denote the output of $x$ as $f_x$, which is an approximation of the distribution of the label $y$ given the input $x$. In practice, $f$ is the (softmax) output of a pre-trained model that predicts label $y$ from $x$. In this section, we aim to develop a finite sample algorithm that connects predictions to actions that can exploit any black-box pre-trained predictive model. We present all the technical proofs in Appendix \ref{sec:pfalgo}.

    Given the model $f$, we estimate functions $\theta$ and $a$ defined in \eqref{eq:keyfun}, via replacing the true conditional probabilities with their estimated counterparts obtained by $f$. In more detail, we obtain $\hat{\theta}$ and $\hat{a}$ via
    \begin{equation}\label{eq:estfun}
    \begin{aligned}
        \hat{\theta}(x,t)
 &= 
\max_{a\in\mathcal A}
\mathrm{quantile}_{1-t}\bigl[u(a,Y)\mid Y\sim f_x\bigr],\\
\hat{a}(x,t)
 &= 
\argmax_{a\in\mathcal A}
\mathrm{quantile}_{1-t}\bigl[u(a,Y)\mid Y\sim f_x\bigr].
    \end{aligned}
\end{equation}
For any $\boldsymbol{\lambda}\in\mathbb{R}_{\geq 0}^{|\mathcal{A}|}$, define the optimal assignment function
\begin{equation}\label{eq:finitethre}
        \hat{t}(x,\boldsymbol{\lambda}) = \argmax_{t} \left(\hat{\theta}(x,t)+\sum_a\lambda_a(t-(1-\alpha))\mathbbm{1}_{\hat{a}(x,t) =a}\right).
    \end{equation}
    and the conformal prediction set $\hat{C}(x,\boldsymbol{\lambda}) = \{y \in \mathcal{Y} : u(\hat{a}(x,\hat{t}(x,\boldsymbol{\lambda})), y) \geq \hat{\theta}(x, \hat{t}(x,\boldsymbol{\lambda}))\}.$
    
    We note that all the functions can be easily computed because they are parametrized by $\boldsymbol{\lambda}$.  From Theorem \ref{thm:dual} we know that there exists a $\boldsymbol{\lambda}^*$ such that the optimal prediction sets is derived using the function $\hat{t}(x,\boldsymbol{\lambda}^*)$ and $\hat{C}(x,\boldsymbol{\lambda}^*)$. Thus, the next question is now to select a feasible $\boldsymbol{\lambda}^*$ that satisfies the action-conditional coverage constraints. To do this, we first reparametrize the event $\{Y\in \hat{C}(x,\boldsymbol{\lambda})\}$ by a closed-form of $\boldsymbol{\lambda}$. In more detail, we note that $\hat{C}$ is determined by the assignment function $\hat{t}$, which can be decoupled by its one-dimensional counterpart. For any $a\in\mathcal{A}$, define $\hat{t}(x,\lambda_a)$ as
    \[
\hat t(x,\lambda_a)
   =\argmax_{t\in[0,1]}
      \bigl(\hat\theta(x,t)+\lambda_a(t-(1-\alpha))\bigr),
\]
    then $\hat{t}(x,\lambda_a)$ is only parametrized by the scalar $\lambda_a$.  We further define the action specific nonconformity score $\lambda_a^*$ for any $(x,y)$ as follows
    \begin{equation}\nonumber
        \begin{aligned}
            \lambda_a^{*}(x,y)=\inf\Bigl\{\lambda_a\ge0:
       y\in\mathrm{QuantileSet}_{1-\hat t(x,\lambda_a)}
            [u(a,Y)\mid Y\sim f_x]\Bigr\}.
        \end{aligned}
    \end{equation}
where the quantile set is defined as
\begin{equation}
    \begin{aligned}
        &\mathrm{QuantileSet}_{1 - t}[u(a, Y) \mid Y \sim f_x]= \left\{ y \in \mathcal{Y} : \mathbb{P}_{Y \sim f_x}(u(a, Y) \le u(a, y)) \le 1 - t \right\}.
    \end{aligned}
\end{equation}

Intuitively, the larger the value of $\lambda_a^*(x,y)$, the more conservative it is to select action $a$ at input $x$ when facing outcome $y$. This quantifies how atypical $y$ is for $x$, if we want to play action $a$. The lemma below, under mild tie-breaking assumptions, describes an important property of these~scores.
\begin{lemma}\label{thm:pin}
        Conditioning on the event
$\{\hat a(x,\hat t(x,\boldsymbol\lambda))=a\}$, the following events are equivalent
\[
\{Y\in\hat C(x,\boldsymbol\lambda)\}
   =\{\lambda_a\ge\lambda_a^{*}(x,Y)\}.
\]\end{lemma}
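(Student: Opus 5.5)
Fix $x$ and condition on $\hat a(x,\hat t(x,\boldsymbol\lambda))=a$. The plan is to reduce both events to statements about a single scalar threshold, and then use monotonicity in $\lambda_a$ to turn membership of $Y$ into a comparison with $\lambda_a^*(x,Y)$.

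\textbf{Step 1 (decoupling).} On the conditioning event, only the $a$-th penalty term is active at the maximizer of \eqref{eq:finitethre}. So I will show $\hat t(x,\boldsymbol\lambda)=\hat t(x,\lambda_a)$, under the mild tie-breaking assumption mentioned before the lemma. Write $F_{\boldsymbol\lambda}(t)=\hat\theta(x,t)+\sum_b\lambda_b(t-(1-\alpha))\mathbbm 1_{\hat a(x,t)=b}$ and $G_{\lambda_a}(t)=\hat\theta(x,t)+\lambda_a(t-(1-\alpha))$. These agree at every $t$ with $\hat a(x,t)=a$, and in particular at $t_0=\hat t(x,\boldsymbol\lambda)$. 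Showing that $t_0$ also maximizes $G_{\lambda_a}$ is the delicate part, because at other $t$ the two objectives use different penalty coefficients. I would argue this in two ways:
\begin{itemize}
\item Under the tie-breaking convention, the selected action region is stable near the optimum, so the maximizer of $F_{\boldsymbol\lambda}$ restricted to $\{t:\hat a(x,t)=a\}$ coincides with the global maximizer of $G_{\lambda_a}$.
\item Alternatively, I would simply define the conditioning so that the relevant threshold is $\hat t(x,\lambda_a)$, which is how the lemma is used downstream.
\end{itemize}
I expect this step to be the main obstacle, and I would make the assumption explicit.

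\textbf{Step 2 (set as a quantile set).} Let $t=\hat t(x,\lambda_a)$. On the event, $\hat C(x,\boldsymbol\lambda)=\{y:u(a,y)\ge\hat\theta(x,t)\}$, and $\hat\theta(x,t)$ is the $(1-t)$-quantile of $u(a,Y)$ under $f_x$, since $a$ attains the max. I then identify $\{y:u(a,y)\ge \mathrm{quantile}_{1-t}\}$ with $\mathrm{QuantileSet}_{1-t}[u(a,Y)\mid Y\sim f_x]$, again modulo the stated tie conventions on the CDF at atoms. Hence $Y\in\hat C$ iff $Y\in\mathrm{QuantileSet}_{1-\hat t(x,\lambda_a)}$.

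\textbf{Step 3 (monotonicity and infimum).} I will show $\lambda\mapsto\hat t(x,\lambda)$ is nondecreasing. This is a standard exchange argument: if $\lambda<\lambda'$ with maximizers $t,t'$, adding the two optimality inequalities gives $(\lambda'-\lambda)(t'-t)\ge0$. The quantile sets are nested and grow with $t$, since lower quantile levels $1-t$ admit more $y$. Therefore $\{\lambda: Y\in\mathrm{QuantileSet}_{1-\hat t(x,\lambda)}\}$ is an up-ray $[\lambda_a^*(x,Y),\infty)$.

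That the endpoint is included needs right-continuity or upper semicontinuity. I would obtain it from $\hat\theta$ being a step function in $t$ (finite $\mathcal Y$) together with argmax tie-breaking toward larger $t$, so the infimum is attained. Combining the three steps gives $Y\in\hat C(x,\boldsymbol\lambda)\iff\lambda_a\ge\lambda_a^*(x,Y)$.
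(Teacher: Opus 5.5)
\textbf{Step 1 is a genuine gap, and it is not a technicality.} Your Steps 2 and 3 are sound: the exchange argument is the same increasing-differences monotonicity the paper uses, and your argument that the infimum is attained is more careful than the paper's bare assertion that the up-ray is closed. Step 1 is where the proof fails. With $\hat t(x,\lambda_a)$ taken as the unrestricted argmax over $[0,1]$, as displayed before the lemma, the identity $\hat t(x,\boldsymbol\lambda)=\hat t(x,\lambda_a)$ on the event is false. Your justification (a) fails because $G_{\lambda_a}$ charges the coefficient $\lambda_a$ on other actions' regions, where $F_{\boldsymbol\lambda}$ charges $\lambda_b$. Local stability of the action region near the optimum does not control where the unrestricted maximizer of $G_{\lambda_a}$ lands.

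Here is a concrete counterexample. Take $\mathcal Y=\{y_1,y_2\}$ with $f_x(y_1)=0.9$ and $f_x(y_2)=0.1$. Let $u(a,y_1)=10$, $u(a,y_2)=0$, and $u(b,y_1)=u(b,y_2)=5$. Set $\alpha=0.05$ and $(\lambda_a,\lambda_b)=(80,0)$.
\begin{itemize}
\item For $t\le 0.9$ you get $\hat\theta(x,t)=10$ with $\hat a=a$. For $t>0.9$ you get $\hat\theta(x,t)=5$ with $\hat a=b$. The no-tie assumption holds.
\item $F_{\boldsymbol\lambda}$ is maximized at $t=0.9$ with value $10+80(0.9-0.95)=6$. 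So action $a$ is selected and $\hat C(x,\boldsymbol\lambda)=\{y_1\}$.
\item However, $G_{80}(1)=5+80(0.05)=9>6$, so $\hat t(x,80)=1$. The quantile set of $u(a,Y)$ at that level is $\{y_1,y_2\}$, and one computes $\lambda_a^*(x,y_2)=50\le 80$.
\item Yet $y_2\notin\hat C(x,\boldsymbol\lambda)$.
\end{itemize}
So under that reading the lemma itself fails, not just your decoupling step. Your Step 2 also breaks here: $\hat t(x,\lambda_a)=1$ lies in action $b$'s region, so $\hat\theta(x,\hat t(x,\lambda_a))$ is a quantile of $u(b,Y)$, not $u(a,Y)$. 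Route (b) cannot rescue this. The downstream use in Theorem~\ref{thm:finite} identifies $p_a$, computed from the scores, with the action-wise empirical coverage of $\hat C$. That identification needs the set/score equivalence exactly as stated.

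\textbf{The missing idea is to restrict the scalar problem to the action region.} This is what the paper's proof does, silently departing from the main-text definition. Let $S_a(x)=\{t:\hat a(x,t)=a\}$; these sets partition $[0,1]$ under the no-tie assumption. Define $t(x,\lambda_a)$ as the rightmost maximizer of $\hat\theta(x,t)+\lambda_a(t-(1-\alpha))$ over $S_a(x)$ only, and define $\lambda_a^*$ through this restricted map. With this change, everything goes through:
\begin{itemize}
\item Decoupling becomes immediate. $F_{\boldsymbol\lambda}$ and $G_{\lambda_a}$ coincide on $S_a(x)$. On the event, the rightmost global maximizer of $F_{\boldsymbol\lambda}$ lies in $S_a(x)$, so it is also the rightmost maximizer of $G_{\lambda_a}$ over $S_a(x)$.
\item Your exchange argument still gives monotonicity, because both maximizers lie in the same domain $S_a(x)$.
\item On $S_a(x)$ one has $\hat\theta(x,t)=q_{a,x}(t)$ identically, where $q_{a,x}$ is the upper quantile of $u(a,Y)$ under $f_x$.
\end{itemize}
Your Steps 2 and 3 then go through unchanged.
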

Lemma \ref{thm:pin} implies membership in the set $\hat C(x,\boldsymbol{\lambda})$ reduces to the scalar threshold test $\{\lambda_a\ge\lambda_a^*(x,Y)\}$ conditioned on picking action $a$.  In other words, for each $(x,y)$, the action specific nonconformity score $\lambda_a^*(x,y)$, is the value such that $y$ lies in the prediction set exactly when $\lambda_a\ge\lambda_a^*(x,y)$. 

To calibrate the \( |\mathcal{A}| \)-dimensional vector \( \boldsymbol{\lambda} \), we propose a novel formulation based on pinball loss minimization using action specific nonconformity scores. Specifically, we include an imputed test pair \( (x_{\mathrm{test}}, y) \) alongside the \( n \) calibration samples \( \{(x_i, y_i)\}_{i=1}^n \), resulting in \( n+1 \) total points. For each sample \( i = 1,\dots,n \) and the test point, we define the loss:
\begin{equation}\label{eq:pinloss}
    \begin{aligned}
F_y(\boldsymbol\lambda)
&= \frac{1}{n+1} \sum_{i=1}^{n}
     \ell_\alpha\Big(
        \sum_a \lambda_a\,\mathbbm{1}_{\{\hat a(x_i,\hat t(x_i,\boldsymbol\lambda))=a\}}, \sum_a \lambda_a^{*}(x_i, y_i)\,\mathbbm{1}_{\{\hat a(x_i,\hat t(x_i,\boldsymbol\lambda))=a\}}
     \Big) \\ \nonumber
&\quad + \frac{1}{n+1}
     \ell_\alpha\Big(
        \sum_a \lambda_a\,\mathbbm{1}_{\{\hat a(x_{\rm test},\hat t(x_{\rm test},\boldsymbol\lambda))=a\}}, \sum_a \lambda_a^{*}(x_{\rm test}, y)\,\mathbbm{1}_{\{\hat a(x_{\rm test},\hat t(x_{\rm test},\boldsymbol\lambda))=a\}}
     \Big),
\end{aligned}
\end{equation}
where the pinball loss is defined as $\ell_\alpha(u,v) = (v - u)\left( \mathbbm{1}_{u \le v} - \alpha \right)$. Minimizing \( F_y(\boldsymbol{\lambda}) \) over \( \boldsymbol{\lambda} \in \mathbb{R}_{\ge 0}^{|\mathcal{A}|} \) thus yields a set of parameters that encode action-specific thresholds, leading to finite-sample coverage guarantees. We can now present our main finite sample algorithm in \ref{algorithm}.

\begin{algorithm}[h]
\caption{Action-Conditional Risk Averse Calibration (AC-RAC)}\label{algorithm}
\KwIn{Miscoverage level $\alpha$, calibration samples $\{(x_i, y_i)\}_{i=1}^n$, test covariate $x_{\mathrm{test}}$}

\For{each $y \in \mathcal{Y}$}{
  \quad $\hat{\boldsymbol{\lambda}}_y =\argmin_{\boldsymbol{\lambda}\in\mathbb{R}_{\geq0}^{|\mathcal{A}|}}F_y(\boldsymbol{\lambda}).$
}
\KwOut{$
C_{\mathrm{final}}(x_{\mathrm{test}}) =\{y\in \mathcal{Y} \mid y \in \hat{C}(x_{\mathrm{test}},\hat{\boldsymbol{\lambda}}_y)\}
$}
\end{algorithm}
\begin{remark}
\rev{We optimize $F_y(\boldsymbol{\lambda})$ using projected subgradient descent, since the objective is suitable for first-order optimization but has no closed-form minimizer in general. By Lemma~\ref{thm:pin}, after conditioning on the induced action, each sample contributes through the scalar comparison $\lambda_a\geq\lambda_a^*(x,y)$. Thus each iteration reduces to computing the induced actions and the corresponding action-wise empirical coverage terms. We present the full algorithmic details and computational-efficiency analysis in Appendix~\ref{sec:pse}, and provide the subgradient derivation and convergence statement in Appendix~\ref{sec:con}.}
\end{remark}
Next, we show that AC-RAC outputs prediction sets with distribution free guarantees at test time. 

\begin{theorem}[Finite‐Sample Action‐Conditional Validity]
\label{thm:finite}
Let  $(x_1,y_1),\dots,(x_n,y_n),(x_{\mathrm{test}},y_{\mathrm{test}})$ be exchangeable samples. Then for every \(a\in\mathcal A\),
\[
  \mathbb P\bigl(y_{\mathrm{test}}\in C_{\mathrm{final}}(x_{\mathrm{test}})\bigm| a_{\mathrm{RA}}^{C_\mathrm{final}}(x_{\mathrm{test}})=a\bigr)
  \ge1-\alpha.
\]
If \((x_i,y_i)\) are i.i.d. and the  score $\lambda_a^*(X,Y)$ has a continuous distribution for all $a$, then
\begin{equation}\nonumber
    \begin{aligned}
          \mathbb P\bigl(y_{\mathrm{test}}\in C_{\mathrm{final}}(x_{\mathrm{test}})\bigm| a_{\mathrm{RA}}^{C_\mathrm{final}}(x_{\mathrm{test}})=a\bigr)
  \le1-\alpha
  +
  \frac{|\mathcal A|}{(n+1)\cdot\mathbb P(a_{\mathrm{RA}}^{C_\mathrm{final}}(x_{\mathrm{test}})=a)}.
    \end{aligned}
\end{equation}
for each $a\in\mathcal A$.
\end{theorem}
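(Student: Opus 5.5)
The plan is to follow the Gibbs–Cherian–Candès argument for conditional conformal prediction via quantile regression. We treat $F_y$ as a pinball-loss objective over the finite-dimensional function class $\{x\mapsto \sum_a \lambda_a \mathbbm{1}_{\{\hat a(x,\hat t(x,\boldsymbol\lambda))=a\}}\}$. The only twist is that the group indicators depend on $\boldsymbol\lambda$ itself. Write $\hat{\boldsymbol\lambda}:=\hat{\boldsymbol\lambda}_{y_{\mathrm{test}}}$ for the minimizer obtained with the true test label imputed.

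First, because the imputed label is the true one, $F_{y_{\mathrm{test}}}$ is a symmetric function of the $n+1$ exchangeable pairs. We fix a tie-breaking rule for the argmin that is symmetric in the data, or appeal to a measurable symmetric selection. Then $\hat{\boldsymbol\lambda}$ is invariant under permutations of the data. By Lemma~\ref{thm:pin}, on the event $\{\hat a(x_{\mathrm{test}},\hat t(x_{\mathrm{test}},\hat{\boldsymbol\lambda}))=a\}$ we have $y_{\mathrm{test}}\in C_{\mathrm{final}}(x_{\mathrm{test}})$ iff $\hat\lambda_a\ge\lambda_a^*(x_{\mathrm{test}},y_{\mathrm{test}})$. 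This event coincides with $a_{\mathrm{RA}}^{C_{\mathrm{final}}}(x_{\mathrm{test}})=a$, because the max-min action of the set $\hat C(x,\boldsymbol\lambda)$ equals $\hat a(x,\hat t(x,\boldsymbol\lambda))$, as in Proposition~\ref{thm:linrep}.

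Next, write $g_i(a)=\mathbbm{1}\{\hat a(x_i,\hat t(x_i,\hat{\boldsymbol\lambda}))=a\}$ and $S_i=\lambda_a^*(x_i,y_i)$. We then derive first-order optimality conditions for $\hat{\boldsymbol\lambda}$ coordinatewise. Perturb $\lambda_a\mapsto\hat\lambda_a+\varepsilon$ and keep the other coordinates fixed. Heuristically, the induced assignment $g_i$ is locally constant in $\varepsilon$ (the main obstacle, below). The one-sided directional derivatives of $F$ are then nonnegative at the minimizer. This gives
\[
\sum_{i=1}^{n+1} g_i(a)\bigl(\mathbbm{1}\{S_i\le \hat\lambda_a\}-(1-\alpha)\bigr)\ge 0 ,
\]
\[
\sum_{i=1}^{n+1} g_i(a)\bigl(\mathbbm{1}\{S_i< \hat\lambda_a\}-(1-\alpha)\bigr)\le \#\{i: g_i(a)=1,\ S_i=\hat\lambda_a\}.
\]
For $\hat\lambda_a=0$ the nonnegativity constraint only gives the upward perturbation, but that is the one needed for the lower bound. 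Taking expectations and using exchangeability with the permutation invariance of $\hat{\boldsymbol\lambda}$ gives $\mathbb E[g_{n+1}(a)\mathbbm{1}\{S_{n+1}\le\hat\lambda_a\}]=\frac{1}{n+1}\mathbb E\sum_i g_i(a)\mathbbm{1}\{S_i\le\hat\lambda_a\}\ge(1-\alpha)\mathbb E[g_{n+1}(a)]$. Dividing by $\mathbb P(g_{n+1}(a)=1)$ yields the lower bound.

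For the upper bound we use the second inequality. Under i.i.d. data and continuity of the scores, ties $S_i=\hat\lambda_a$ can occur at most at as many points as $\hat{\boldsymbol\lambda}$ has free coordinates. The standard Gibbs et al.\ interpolation argument bounds the number of interpolated points by the dimension, $|\mathcal A|$. Averaging over $i$ as before gives $\mathbb E[g_{n+1}(a)\mathbbm{1}\{S_{n+1}\le\hat\lambda_a\}]\le(1-\alpha)\mathbb E[g_{n+1}(a)]+\frac{|\mathcal A|}{n+1}$. Dividing by $\mathbb P(a_{\mathrm{RA}}^{C_{\mathrm{final}}}(x_{\mathrm{test}})=a)$ gives the stated bound.

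The main obstacle is the first-order condition, because the indicators $g_i$ depend on $\boldsymbol\lambda$, so $F$ is not a plain convex quantile regression. I would argue this in two steps. First, $\hat t(x,\boldsymbol\lambda)$ and hence $\hat a$ depend on $\boldsymbol\lambda$ through piecewise-constant regions (finitely many candidate $t$ values for a discrete $f_x$). Second, at a minimizer the relevant one-sided perturbations either leave the assignments unchanged for small $\varepsilon$, or switch a point's action in a direction that does not decrease its loss contribution. If this fails, the fallback is to perform the stationarity argument on the restricted problem with assignments frozen at their values at $\hat{\boldsymbol\lambda}$. This problem is convex in $\boldsymbol\lambda$ and has $\hat{\boldsymbol\lambda}$ as a local minimizer, hence a global one, and it yields the same inequalities.
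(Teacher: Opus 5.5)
Your argument has the same skeleton as the paper's proof. The paper's Lemma~\ref{thm:subopt} computes the subdifferential of $F_{y_{n+1}}$ with the assignments held fixed ($\partial_{\lambda_a} r_i=\mathbbm{1}\{a=a_i\}$). The condition $0\in\partial F_{y_{n+1}}(\boldsymbol\lambda)$ then gives $0\le q_a(p_a-(1-\alpha))\le B_a/(n+1)$, Lemma~\ref{thm:pin} with exchangeability turns $p_aq_a$ into test coverage, and continuity gives $B_a\le 1$. Your symmetric selection of the argmin and your separate treatment of $\hat\lambda_a=0$ make explicit what the paper leaves implicit; the paper asserts $0\in\partial F_{y_{n+1}}$ without treating minimizers on the boundary of the orthant.

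\textbf{The gap.} The genuine gap is the step you flag yourself, and neither of your fixes closes it. Write $G$ for the objective with assignments frozen at $\hat{\boldsymbol\lambda}$, and let $e_a$ be the $a$-th coordinate vector. Suppose moving to $\hat{\boldsymbol\lambda}+\varepsilon e_a$ pulls a point into action $a$ and raises $F_y$ by some $\Delta>0$ as $\varepsilon\downarrow 0$. This is possible, since $F_y$ jumps across switching sets. Minimality of $F_y$ then yields only $G(\hat{\boldsymbol\lambda}+\varepsilon e_a)-G(\hat{\boldsymbol\lambda})\ge-\Delta+o(1)$. That says nothing about the sign of the derivative contributed by the points already assigned to $a$. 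So a switch that ``does not decrease'' the switching point's loss is exactly the harmful case; jumps that lower $F_y$ are excluded by minimality anyway. Nothing in your argument prevents the minimizer from sitting on such a boundary with $p_a<1-\alpha$.

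Your fallback is circular. You can transfer minimality from $F_y$ to the convex $G$ only where $G=F_y$, i.e.\ where the assignments are locally constant. That is precisely what fails in the case the fallback is meant to cover. The paper does not close this either: its Clarke chain rule presupposes locally Lipschitz $r_i$, which again means locally constant assignments. Your argument therefore matches the paper's once both fixes are replaced by that assumption stated explicitly. A complete proof needs such a condition, or a bound on the number of points switching at $\hat{\boldsymbol\lambda}$ charged as extra slack.

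\textbf{Two smaller points.} First, your second displayed inequality should have $\mathbbm{1}\{S_i\le\hat\lambda_a\}$ on the left. The downward derivative actually gives $\sum_i g_i(a)\bigl(\mathbbm{1}\{S_i<\hat\lambda_a\}-(1-\alpha)\bigr)\le 0$. As you wrote it, passing from $<$ to $\le$ doubles the tie count.

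Second, and more substantively, your identification of $\{\hat a(x_{\mathrm{test}},\hat t(x_{\mathrm{test}},\hat{\boldsymbol\lambda}))=a\}$ with $\{a_{\mathrm{RA}}^{C_{\mathrm{final}}}(x_{\mathrm{test}})=a\}$ does not follow from the argument of Proposition~\ref{thm:linrep}. That argument applies to $\hat C(x_{\mathrm{test}},\boldsymbol\lambda)$ for a single $\boldsymbol\lambda$. By contrast, $C_{\mathrm{final}}(x_{\mathrm{test}})$ is assembled from different $\hat{\boldsymbol\lambda}_y$ for different candidate labels, so it need not equal $\hat C(x_{\mathrm{test}},\hat{\boldsymbol\lambda}_{y_{\mathrm{test}}})$. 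Moreover, the event you actually condition on depends on the unobserved $y_{\mathrm{test}}$ through $\hat{\boldsymbol\lambda}_{y_{\mathrm{test}}}$. The paper's proof shares this feature: it only controls coverage on $\{a_{n+1}(\boldsymbol\lambda)=a\}$ with $\boldsymbol\lambda=\hat{\boldsymbol\lambda}_{y_{n+1}}$. So this is not a departure from the paper, but your stated justification does not establish it.
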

Theorem~\ref{thm:finite} guarantees that the proposed method achieves valid action-conditional coverage with finite samples and the coverage error is tightly controlled and decays at a rate of $1/n$. The i.i.d. and continuity condition is purely a tie‑breaking device: it guarantees that, with probability 1, at most one calibration point per action lands exactly on the set boundary, which we need to bound the slack term in the upper–tail inequality. Such tie‑breaking assumptions are standard in finite‑sample conformal‑prediction bounds and can always be enforced in practice by adding an arbitrarily small random jitter to the conformity scores, without affecting coverage or utility. We remark that our proof steps are inspired by \cite{gibbs2025conformal} and the results are analogous (see Theorem 3 in \cite{gibbs2025conformal}). 
\begin{corollary}\label{thm:finalcov}
    Assume that  $(x_1,y_1),\dots,(x_n,y_n),$ $(x_{\mathrm{test}},y_{\mathrm{test}})$ are exchangeable. We then have
    \begin{equation}\nonumber
        \begin{aligned}
            \mathbb P\Big(u\left( a_{\mathrm{RA}}^{C_\mathrm{final}}(x_{\mathrm{test}}), y_{\mathrm{test}}\right)\geq\nu_{\mathrm{RA}}^{C_\mathrm{final}}(x_{\mathrm{test}})\bigm| a_{\mathrm{RA}}^{C_\mathrm{final}}(x_{\mathrm{test}})=a\Big)
  \ge1-\alpha,\quad\forall a\in\mathcal{A}.
        \end{aligned}
    \end{equation}
\end{corollary}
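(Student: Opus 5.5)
The corollary should follow directly from Theorem~\ref{thm:finite}. The idea is to show that, for every realization, the event $\{y_{\mathrm{test}}\in C_{\mathrm{final}}(x_{\mathrm{test}})\}$ is contained in the safety event
\[
\bigl\{u\bigl(a_{\mathrm{RA}}^{C_{\mathrm{final}}}(x_{\mathrm{test}}),y_{\mathrm{test}}\bigr)\ge \nu_{\mathrm{RA}}^{C_{\mathrm{final}}}(x_{\mathrm{test}})\bigr\},
\]
and then to condition on the same event $\{a_{\mathrm{RA}}^{C_{\mathrm{final}}}(x_{\mathrm{test}})=a\}$ on both sides.

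\textbf{Step 1 (deterministic max-min inequality).} Let $C=C_{\mathrm{final}}(x_{\mathrm{test}})$ and write $\hat a=a_{\mathrm{RA}}^{C}(x_{\mathrm{test}})$ as in \eqref{eq:acdef}. By definition,
\[
\nu_{\mathrm{RA}}^{C}(x_{\mathrm{test}})=\max_{a'}\min_{y\in C}u(a',y)=\min_{y\in C}u(\hat a,y).
\]
So if $y_{\mathrm{test}}\in C$, then $u(\hat a,y_{\mathrm{test}})\ge \min_{y\in C}u(\hat a,y)=\nu_{\mathrm{RA}}^{C}(x_{\mathrm{test}})$. This gives the pointwise set inclusion, and it holds for every realization of the data. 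It is the same inclusion that underlies Theorem~\ref{thm:equival}.

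\textbf{Step 2 (conditioning).} Intersect both events with $\{\hat a=a\}$, which is the same conditioning event in both probabilities. Monotonicity of probability then gives
\[
\mathbb P(\text{safety}\mid \hat a=a)\ge \mathbb P\bigl(y_{\mathrm{test}}\in C_{\mathrm{final}}(x_{\mathrm{test}})\mid \hat a=a\bigr)\ge 1-\alpha,
\]
where the last inequality is the lower bound of Theorem~\ref{thm:finite}. That lower bound needs only exchangeability, which matches the hypothesis of the corollary.

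\textbf{Expected obstacles.} The only delicate points are bookkeeping, not analysis. First, $\hat a$ in the corollary must be the same random quantity as in Theorem~\ref{thm:finite}, including the same tie-breaking rule in the $\argmax$; I will fix a deterministic tie-breaking rule for both. Second, the case $C=\emptyset$ must be handled. There the minimum over $C$ is $+\infty$ (or the convention is undefined), so $y_{\mathrm{test}}\in C$ fails trivially and the inclusion still holds. I will also note that conditioning on events of probability zero is vacuous, so only actions with $\mathbb P(\hat a=a)>0$ need to be considered.
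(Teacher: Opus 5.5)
Your proposal is correct and matches the paper's intended argument, which is left implicit in the text: the corollary is the lower bound of Theorem~\ref{thm:finite} combined with the pointwise inclusion $\{y_{\mathrm{test}}\in C\}\subseteq\{u(a_{\mathrm{RA}}^{C}(x_{\mathrm{test}}),y_{\mathrm{test}})\ge\nu_{\mathrm{RA}}^{C}(x_{\mathrm{test}})\}$ from the feasibility step of Theorem~\ref{thm:equival}'s proof, applied under the same conditioning event $\{a_{\mathrm{RA}}^{C_{\mathrm{final}}}(x_{\mathrm{test}})=a\}$. Your remarks on using a common tie-breaking rule and on the empty-set and zero-probability cases are appropriate bookkeeping and do not change the argument.
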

Corollary~\ref{thm:finalcov} ensures that the max-min decision policy applied to the final prediction set \( C_{\mathrm{final}} \) yields a distribution-free, action-conditional safety guarantee. Specifically, the utility achieved by the selected action exceeds the certified risk-averse threshold with probability at least \( 1 - \alpha \), conditional on the action taken. Combined with Theorem~\ref{thm:equival}, this result confirms that the output of our finite-sample algorithm satisfies the original action-conditional risk-averse objective in \eqref{eq:conformal}, thereby closing the loop between population-level theory and implementable practice.


\section{Numerical Experiments}\label{sec:exp}

In this section, we empirically evaluate the performance of the proposed action-conditional algorithm (denoted by AC-RAC) on two experimental setups. We begin by describing the competing baselines, followed by the evaluation metrics. We compare AC-RAC against two families of methods, each instantiated with the same pre-trained probabilistic model \(f\colon(x,y)\mapsto f_x(y)\), where $f_x(y)$ is the assigned probability of the input-label pair $(x,y)$.

\textbf{Baseline 1: Conformal Prediction with Max-Min Decision Rule.} 
We benchmark our method against established conformal inference techniques by generating \((1 - \alpha)\)-valid prediction sets \(C(x)\) using split conformal prediction. Among these, we include the \texttt{RAC} method of \cite{kiyani2025decision}, which represents the state-of-the-art approach for risk-averse decision making under \emph{marginal} safety guarantees. \texttt{RAC} employs a risk-sensitive scoring function to construct prediction sets independent of the action space, followed by a max-min decision rule that ensures robust performance against worst-case outcomes. In addition to \texttt{RAC}, we include two action-independent conformal baselines. \texttt{score-1} \citep{sadinle2019least}: inverse probability score, defined as \(1 - f_x(y)\); \texttt{score-2} \citep{romano2020classification}: cumulative tail mass, \(\sum_{y^\prime:f_x(y^\prime)>f_x(y)}f_x(y')\). All methods yield a prediction set \(C(x)\), which is subsequently paired with a max-min utility decision rule: $a_{\mathrm{RA}}^C(x) = \argmax_{a \in \mathcal{A}} \min_{y \in C(x)} u(a, y)$, selecting the safest action that maximizes worst-case utility across the set.

\textbf{Baseline 2: Calibrated Best-Response.} 
We also evaluate against a calibration-based decision-making baseline. This approach applies the decision calibration procedure of \cite{noarov2023high} to adjust the predictive model on a held-out calibration set, ensuring improved reliability through swap regret bounds introduced in \cite{zhao2021calibrating}. Once the calibrated forecast \(f_x(y)\) is obtained, actions are selected to maximize expected utility: $a_{\mathrm{BR}}(x) = \argmax_{a \in \mathcal{A}} \mathbb{E}_{y \sim f_x}[u(a,y)].$ This standard risk-neutral method performs well on average utility but lacks coverage guarantees and is vulnerable to high-risk errors under underestimated uncertainty.

\textbf{Evaluation Metrics.} 
For methods in Baseline 1, we evaluate performance using the following two metrics. \textbf{Action-specific miscoverage:} the empirical miscoverage rate conditioned on each selected action, assessing conditional reliability; \textbf{Worst-case utility:} the average realized utility under the max-min rule, \(\mathbb{E}[\max_a \min_{y \in C(x)} u(a, y)]\), capturing conservative performance. In addition, to compare all methods (including Calibrated Best-Response) we report \textbf{Critical error rate}, which is the fraction of samples where the selected action leads to a highly adverse utility outcome, indicating the frequency of catastrophic decisions. \rev{To quantify the informativeness of the resulting prediction sets, we also report mean prediction-set size and the false discovery rate (FDR), defined as $\mathbb E[|C(X)\setminus\{Y\}|/|C(X)|]$. These diagnostics help distinguish genuine action-conditional calibration from overly conservative set enlargement.} Additional evaluations, including average utility, sweeps over multiple \(\alpha\) values for action-specific miscoverage, set-size diagnostics, FDR, rare-action stress tests, and action-space scaling ablations are presented in Appendix~\ref{sec:num}.

\subsection{Medical Diagnosis}
We evaluate AC-RAC in the context of risk‐sensitive medical diagnosis and treatment planning.  Our dataset is the COVID-19 Radiography Database \citep{chowdhury2020can,rahman2021exploring}, comprising chest X-ray images labeled as \emph{Normal}, \emph{Pneumonia}, \emph{COVID-19}, or \emph{Lung Opacity}.  Images are partitioned at random into training (70\%), calibration (10\%), and test (20\%) subsets. 

For feature extraction, we use Inception-v3 \citep{szegedy2015going,szegedy2016rethinking} pretrained on ImageNet, fine-tuned from the second inception block. Clinical trade-offs are encoded via a utility matrix (Table~\ref{tab:utility_matrix}) mapping each diagnosis to a set of actions. While we report results using this matrix, our framework supports alternative specifications (see Appendix B of \cite{kiyani2025decision}). All baselines are calibrated to ensure consistent mapping of model outputs to the four actions.

Figure~\ref{fig:exp} (top row) presents the results of the medical diagnosis task at a nominal miscoverage level of $\alpha=0.05$. The left panel shows that AC-RAC is the only method achieving valid conditional coverage across all actions, while \texttt{RAC}, \texttt{score-1}, and \texttt{score-2} systematically over- or under-cover. The middle panel reports the average realized max-min utility across $\alpha \in \{0.01, 0.02, 0.03, 0.05, 0.1\}$, revealing that AC-RAC achieves higher worst-case utility than \texttt{score-1} and \texttt{score-2}, and slightly lower utility than \texttt{RAC}. This gap is expected, as \texttt{RAC} is optimized for marginal coverage, while AC-RAC imposes stricter action-conditional guarantees, which introduces a modest trade-off in utility. The right panel shows the rate of critical errors, where a high-risk action is selected for a vulnerable patient group. Here, AC-RAC reduces harmful decisions by a significant margin, highlighting its advantage in safety-critical settings like clinical decision-making. \rev{The omission of action 2 (Quarantine) for the baselines in the left panel is not a plotting artifact: \texttt{RAC}, \texttt{score-1}, \texttt{score-2}, and the calibrated best-response policy never select that action on this task, so their action-conditional errors are undefined for it. AC-RAC, by contrast, selects Quarantine on 2.72\% of test instances and keeps this rare action in the calibration loop. Appendix~\ref{sec:rebuttal-diagnostics} further shows that the stronger guarantee does not arise from excessive conservativeness: at $\alpha=0.05$, AC-RAC increases the mean set size relative to \texttt{RAC} only from 3.252 to 3.373 (about 3.7\%) and has similar overall FDR (0.699 versus 0.682).}

\begin{figure}
    \centering
    \includegraphics[width=0.8\linewidth]{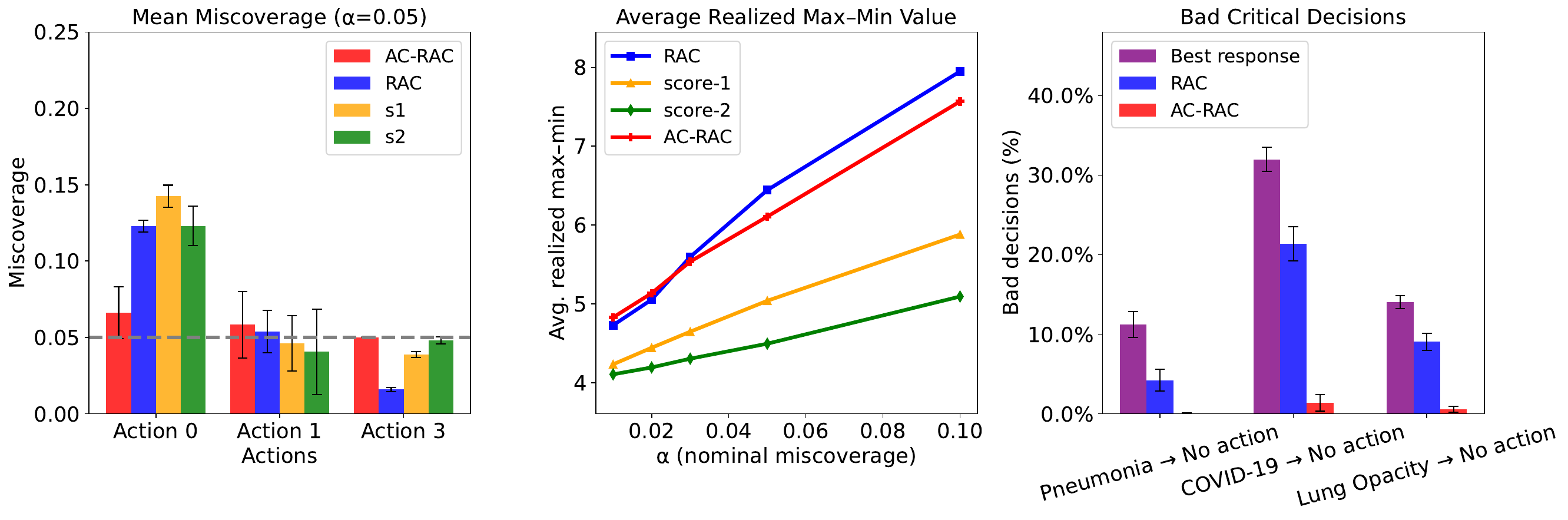}
    \includegraphics[width=0.8\linewidth]{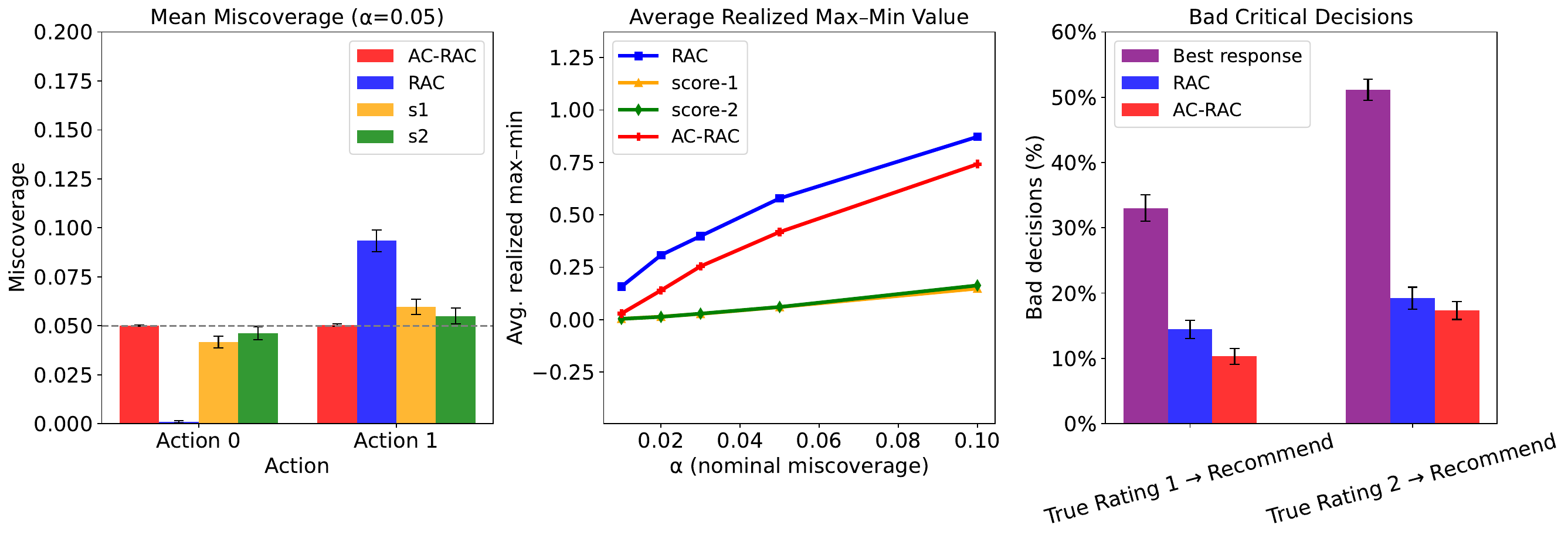}
    
    \caption{Comparison of methods under action-conditional constraints at nominal level $\alpha = 0.05$. Each row shows (left) mean miscoverage across actions, (center) average realized max-min utility, and (right) critical decision error rate. The first row is the result of medical diagnosis and second row is recommender system. All the error bars are averaged over 40 random seeds. \rev{In the medical-diagnosis miscoverage panel, baseline values for action 2 are omitted because those methods never select that action, so the corresponding conditional miscoverage is undefined.}}
    
    \label{fig:exp}
\end{figure}

\subsection{Recommender Systems}
We next evaluate AC-RAC in the context of recommendation systems, where the goal is to decide about whether to suggest a movie to a user. Each feature corresponds to a user–item pair \(x = (u, m)\), where \(u\) and \(m\) are feature vectors representing the user and the movie, respectively. The associated label $y \in \{1, 2, 3, 4, 5\}$ denotes the rating assigned by the user. The action space is binary: \(\mathcal{A} = \{\text{No-Rec},\, \text{Rec}\}\), where \text{Rec} indicates recommending the movie. We partition the dataset into three splits: 80\% for training the predictive model, 10\% for calibration, and   10\% for evaluation.

At decision time, utility of each action depends on the true rating (specified in Table~\ref{tab:rec_util}). Recommending a movie with rating \(y\) yields utility \(u(\text{Rec}, y) = y - 3\), encouraging recommendations only when the rating is above average. The no-recommendation action has  utility \(u(\text{No-Rec}, y) = 0\), reflecting a conservative fallback. This setup captures the trade-off between opportunity and risk: aggressive recommendations can yield high returns but carry greater downside under uncertainty.

Figure~\ref{fig:exp} (bottom row) summarizes the performance of each method on a recommendation task under $\alpha=0.05$. The left panel  confirms that AC-RAC uniquely achieves calibrated action-conditional coverage, whereas the other methods suffer from miscalibration. This misalignment between coverage and decision type leads to reliability gaps. The middle panel shows that AC-RAC achieves worst-case utility comparable to \texttt{RAC}, while offering significantly better action-conditional coverage. The right panel reveals that AC-RAC substantially reduces the rate of critical recommendation errors.


\section{Conclusion and Discussion}\label{sec:discussion}
We introduced CP sets with action-conditional guarantees, and then showed these sets provide a feasible policy for risk averse decision makers seeking action conditional safety guarantees. Building upon that we construct finite-sample CP sets with provable action-conditional validity. While this work provides new avenues for principled uncertainty quantification in safety-critical domains, it will be interesting to explore other notions of safety beyond quantiles (such as CVaR). \rev{The present theory and algorithm are deliberately scoped to finite, discrete action spaces. One natural extension is to discretize a continuous action space and then apply AC-RAC on the discretized set, which preserves the current finite-sample guarantee on that discretization. A more ambitious direction is to parameterize the action-dependent threshold $\lambda(a)$ using linear features, kernels, or neural networks and optimize the resulting pinball objective over that function class. Such an extension would require new arguments controlling the complexity of the function class and the induced action-dependent conditioning events, so the current guarantee would not transfer verbatim.}

\section*{Acknowledgement}
The authors thank EnCORE, the Institute for Emerging CORE Methods in Data Science, for their support. SK additionally acknowledges support from a gift from AWS to Penn Engineering’s ASSET Center for Trustworthy AI.


\bibliography{ref}
\bibliographystyle{icml2026}

\newpage
\appendix

\section{Further Related Work}\label{sec:relat}

\paragraph{Economic Literature on Risk Aversion.} Decision-making under risk aversion is a classical topic in economics, with foundational concepts introduced by \citet{arrow1965aspects}, \cite{pratt2013risk}, and studies on stochastic dominance \citep{hadar1969rules, meyer1977second}. Recent economic research has increasingly focused on conditional guarantees in decision-making, emphasizing that risk assessments must reflect specific decision contexts rather than aggregate or marginal measures. For example, \citet{SteffensenSoe2023} develop a continuous-time decision model with state-dependent risk aversion, showing how optimal choices adjust with health or economic status. Similarly, \citet{DesmettreSteffensen2023} model portfolio selection under random risk preferences, highlighting the role of expectation over uncertain attitudes. \citet{LimRavaioli2021} provide experimental evidence for set-dependent preferences, where risk aversion is shaped by the choice environment, while \citet{Brandtner2020} demonstrate how shortfall risk measures allow for richer, context-aware modeling of investor risk tolerance. In parallel, \citet{TaoDelageXu2025} introduce a contextual risk framework where decisions are optimized with respect to one uncertainty while conditioning on another, and \citet{Batista2023} incorporates subgroup-specific risk aversion into macroeconomic modeling to better explain international puzzles.

However, these works predominantly rely on explicit parametric assumptions or full knowledge of underlying distributions. In contrast, our approach focuses on learning and uncertainty quantification aspects explicitly, developing distribution-free methods capable of leveraging any black-box pretrained model.

\paragraph{Bayesian Approaches to Risk-Averse Decision-Making.}Bayesian methods provide another prominent framework for risk-averse decision-making, commonly utilizing models like Gaussian Processes (GPs) to quantify uncertainty and optimize measures such as Value-at-Risk (VaR). Notable examples include safe exploration in optimization settings \citep{sui2015safe}, direct VaR optimization using GPs \citep{nguyen2021value}, and dose allocation strategies in precision medicine \citep{demirel2022escada}.  Recent works have extended Bayesian methods toward conditional risk-averse decision making by optimizing risk-sensitive metrics such as Value-at-Risk (VaR) or Conditional Value-at-Risk (CVaR) with respect to specific actions or policies. \citet{Cakmak2020BayesOptRisk} introduce a Bayesian optimization framework that directly targets risk measures over outcome distributions using Gaussian Processes, offering action-dependent safety. \citet{Lin2022BRMDP} propose Bayesian Risk MDPs, which evaluate actions based on posterior CVaR, yielding action-dependent safety in sequential decision-making. \citet{Baudry2021CVaRTS} develop Thompson Sampling algorithms for multi-armed bandits that optimize CVaR, guaranteeing robust lower-tail performance per action. Most recently, \citet{Jia2024BayesianSafe} introduce a Bayesian policy optimization method using chance constraints on a novel average conditional risk measure, providing conditional guarantees across subpopulations.

These approaches rely on accurate Bayesian posterior distributions, thus implicitly assuming well-specified probabilistic models. Our conformal approach complements rather than competes with Bayesian methods: our theoretical results (up to Section 3) can be directly employed even in Bayesian settings. In fact, when Bayesian approximations are reliable, one can take advantage of our optimal prediction sets derivation in population, and then calibrate the prediction sets with finite sample under Bayesian models, without employing the finite-sample calibration of Section 4. Alternatively, even when Bayesian assumptions' precision is uncertain, one can still start from Bayesian posteriors and further calibrate prediction sets using our approach, ensuring robust safety guarantees.

\paragraph{Conditional robust optimization (CRO).}
Robust optimization (RO) and distributionally robust optimization (DRO) are classical paradigms for decision making under uncertainty: RO optimizes against worst-case realizations over a prescribed uncertainty set, while DRO optimizes against worst-case distributions within an ambiguity set \citep{bental2009robust,bertsimas2004price,delage2010distributionally,rahimian2019distributionally,goh2010distributionally}. A central recent direction is to incorporate \emph{context} $x$ by learning context-dependent uncertainty/ambiguity sets $\mathcal U(x)$, often termed conditional robust optimization (CRO). Early data-driven RO developed tractable, data-derived uncertainty sets—e.g., via residual-based constructions or ML-guided set design—typically without explicit conditioning \citep{tulabandhula2014robust,bertsimas2018data}. CRO extends this by allowing the set to adapt with $x$, enabling finer protection when uncertainty varies across contexts. Recent methods build $\mathcal U(x)$ using local neighborhoods or clustering in covariate space, contextual parametric families, or decision-focused learning objectives that optimize downstream robust performance \citep{chenreddy2022ddcro,ohmori2021predictive,wang2023learning}. Complementary work develops end-to-end CRO by differentiating through the robust optimization layer to learn $\mathcal U(x)$ jointly with predictors, improving empirical robustness but typically relying on additional modeling/algorithmic assumptions beyond distribution-free guarantees \citep{chenreddy2024endtoend}. Relatedly, contextual DRO and decision-dependent ambiguity sets have been explored, where uncertainty modeling depends on both covariates and decisions \citep{zhu2024residuals}. Recent work also connects conformal prediction to CRO by using conformal regions to build uncertainty sets with frequentist coverage that can be plugged into robust pipelines \citep{johnstone2021conformal,patel2024conformal}. Our setting is complementary: rather than protecting an optimization model against uncertain parameters, we use conformal prediction sets as an interface for risk-averse decision making with action-conditional statistical guarantees; crucially, the conditioning event is endogenous (the selected action depends on the prediction set), so standard CRO calibration does not directly yield distribution-free action-conditional validity.

\paragraph{Other Related Works.} The max-min decision rule discussed in Section \ref{sec:prob} naturally aligns with robust optimization paradigms, where optimal decisions are sought against worst-case realizations over uncertainty sets \citep{patel2024conformal, johnstone2021conformal, chenreddy2024endtoend, li2025data, yeh2024end, cao2024non, wang2023learning, lou2024estimation, patel2024non, elmachtoub2023estimate, lin2024conformal, chan2024conformal, chan2023inverse, chan2020inverse}. However, these works typically do not consider action-conditional guarantees. Extending such robust formulations to incorporate action-conditioned safety constraints remains an intriguing direction for future work.

Conformal prediction's role in decision-making has also been examined from the perspective of predictive distributions \citep{vovk2018conformal, vovk2017nonparametric, vovk2018cross, vovk2020computationally}. These approaches generate calibrated distributions suitable for expectation-maximizing (i.e., risk-neutral) agents, and are more comparable to calibrated probabilistic forecasts. In contrast, we focus on risk-averse agents and show that prediction sets, rather than distributions, serve as surrogates for optimizing action-conditional value-at-risk objectives.

Finally, conformal prediction methods have been explored in a wide range of high-stakes applications. These include clinical medicine \citep{banerji2023clinical}, energy systems \citep{renkema2024conformal}, formal methods and control \citep{lindemann2024formal}, and chance-constrained optimization \citep{zhao2024conformal}, UQ for LLMs \citep{ quach2024conformal, noorani2026conformal, kiyani2026trust}, human-AI collaboration \citep{noorani2025human, noorani2026multi}  among others \citep{sun2024conformal, ramalingam2024uncertainty,
straitouri2023improving, vishwakarma2024improving,
vanderlaan2024selfcalibratingconformalprediction, noorani2024conformalriskminimizationvariance}. We leave the investigation of application of our framework and action-conditional guarantees to these settings as future works.

\section{Details on the AC-RAC Algorithm}\label{sec:detailrac}

In the main text we outlined AC-RAC in Algorithm \ref{algorithm}. Here we describe in detail how to solve the calibration subproblem
\[
\hat{\boldsymbol{\lambda}}_y =\argmin_{\boldsymbol{\lambda} \in \mathbb{R}_{\ge 0}^{|\mathcal{A}|}} F_y(\boldsymbol{\lambda}),
\]
the core of which is minimizing the pinball‐based loss \(F_y\) defined in \eqref{eq:pinloss}. Under a mild ties-breaking assumption, one can show that \(F_y\) admits the following projected subgradient step for each action \(a\in\mathcal A\):
\[
\lambda_a \leftarrow\Bigl[\lambda_a - \eta g_a\Bigr]_+,
\]
where \([\cdot]_+\) denotes projection onto the nonnegative reals, \(\eta>0\) is a step size, and
\[
g_a = q_a\bigl(p_a - (1-\alpha)\bigr),\quad
q_a = \frac{n_a}{n+1},\quad
p_a = \frac{1}{n_a}\sum_{i:a_i = a} \mathbbm{1}_{\{\lambda_a \ge \lambda^*_i\}},\quad
n_a = \bigl|\{i \le n+1 : a_i = a\}\bigr|.
\]
Here:
\begin{itemize}
  \item \(a_i = \hat a\bigl(x_i,\hat t(x_i,\boldsymbol{\lambda})\bigr)\) is the action selected on calibration sample \((x_i,y_i)\) under the current thresholds.
  \item \(\lambda^*_i = \lambda^*_{a_i}(x_i,y_i)\) is the action-conditional nonconformity score defined in Lemma \ref{thm:pin}. We remark that Lemma \ref{thm:pin} implies that $p_a$ can be further rewritten as
  \[p_a = \frac{\sum_{i:a_i = a}\mathbbm{1}_{y_i\in \hat C\bigl(x_i,\boldsymbol{\lambda}\bigr)}}{n_a}.\]
  \item The test pair \((x_{\rm test},y)\) is treated as the \((n+1)\)-th sample in all counts and sums.
\end{itemize}

Since each \(g_a\) depends only on simple counts over the \(n+1\) points and the current actions \(\{a_i\}\), it can be computed in \(O(n|\mathcal A|)\) time per iteration. We present the computation of the full subgradient of $F_y$ in Lemma \ref{thm:subopt}.

\subsection{Pseudocode}\label{sec:pse}
\begin{algorithm}[H]
\caption{Solving AC-RAC with Gradient Descent}\label{alg:calibrate}
\KwIn{Calibration samples $\{(x_i,y_i)\}_{i=1}^n$, test sample $(x_{n+1},y_{n+1}) = (x_{\text{test}},y)$, nominal coverage level $\alpha$, maximal number of iteration $K$, learning rates $\{\eta_k\}_{k=1}^K$}

\For{$a\in\mathcal A$}{
  \textbf{Initialize dual:} $\lambda_a^{(0)}\gets0$
}
\For{$k\gets1$ \KwTo $K$}{
  \For{$i\gets1$ \KwTo $n+1$}{
    \textbf{Select action:} $a_i\gets \hat a\bigl(x_i,\hat t(x_i,\boldsymbol{\lambda}^{(k-1)})\bigr)$\\
    \textbf{Build set:}
      $C_i\gets\hat{C}(x_i,\boldsymbol{\lambda}^{(k-1)})$
  }
  \For{$a\in\mathcal A$}{
    \textbf{Estimate coverage:}
      $p_a\gets\frac{|\{i:a_i=a\land y_i\in C_i\}|}{|\{i:a_i=a\}|}, q_a\gets\frac{|\{i:a_i=a\}|}{n+1}$\\
    \textbf{Update parameter:}
      $\lambda_a^{(k)}\gets\bigl[\lambda_a^{(k-1)}-\eta_kq_a(p_a-(1-\alpha))\bigr]_+$
  }
}
\Return{$\hat{\boldsymbol{\lambda}}_y = \boldsymbol{\lambda}^{(K)}$}
\end{algorithm}

\paragraph{Computational efficiency analysis.}
In typical deep learning deployments, the dominant cost is computing $f_x$ for all calibration inputs once (a single forward pass per $x_i$), which is shared across all baselines.
Given stored logits/probabilities, AC-RAC's additional overhead is the optimization over $\boldsymbol{\lambda}$.
Algorithm~\ref{algorithm} resembles label-conditional split conformal in that it performs an independent calibration for each candidate label $y\in\mathcal{Y}$, but our calibration involves solving a (sub)gradient-based optimization rather than a single quantile computation. Concretely, let $n$ be the calibration size, $K$ the number of subgradient steps used to minimize $F_y(\boldsymbol{\lambda})$, and $|\mathcal{A}|$ the number of actions.
Each subgradient step recomputes the induced action $\hat a(x_i,\hat t(x_i,\boldsymbol{\lambda}))$ for $i\in[n]$ and evaluates the pinball losses, yielding total calibration time on the order of $\tilde O\!\bigl(|\mathcal{Y}|\cdot K \cdot n \cdot |\mathcal{A}|\bigr)$, where the $\tilde O(\cdot)$ notation hides the cost of the 1D maximizations over $t\in[0,1]$ in \eqref{eq:finitethre} (and any fixed grid resolution used to evaluate $\hat\theta(x,t)$). This is higher than standard split conformal, but can be made practical via:
(i) \emph{precomputation}: the action-specific nonconformity scores $\lambda_a^*(x_i,y_i)$ depend only on $(x_i,y_i,a)$ and can be computed once and cached;
(ii) \emph{parallelization}: the calibrations across $y\in\mathcal{Y}$ are parallel;
(iii) \emph{warm starts}: initialize $\hat{\boldsymbol{\lambda}}_y$ from nearby labels or from a shared initialization;
(iv) \emph{stochastic subgradients}: use mini-batches of calibration points for large $n$.

\section{Technical Proofs}
\subsection{Proof of Theorem \ref{thm:equival}}\label{sec:pfequival}
We proof the direction from AC-CPO to AC-DPO. Suppose $C(\cdot)$ is a feasible solution to Problem~\eqref{eq:conformal}, i.e., for any $a \in \mathcal{A}$,
\[
    \mathbb{P}\bigl(Y \in C(X)   \bigm|  
    \argmax_{a^\prime\in\mathcal{A}}\min_{y\in C(X)}u(a^\prime,y) = a
    \bigr)
     \ge  1-\alpha.
\]
Define an action policy $a^*(x)$ and utility certificate $\nu^*(x)$ by
\[
    a^*(x)
     = 
    \argmax_{a \in \mathcal{A}}
    \min_{y \in C(x)}  u(a,y),
    \qquad
    \nu^*(x)
     = 
    \max_{a \in \mathcal{A}}
    \min_{y \in C(x)}  u(a,y).
\]
In the definition of $a^*(x)$, the maximization operator $\argmax$ may return
multiple actions if two or more actions attain the same value of the worst case
utility. To ensure that $a^*(x)$ is a
well-defined function, we impose a deterministic tie-breaking rule: among all
maximizers, select one according to a fixed and consistent convention (for
instance, by a pre-specified ordering of the action space or adding small gaussian noise to the utility function). This choice does not
affect feasibility or the value of the objective, but it guarantees that the
construction of $a^*(\cdot)$ and $\nu^*(\cdot)$ is unambiguous.

\emph{Feasibility.}
We claim that $(a^*(\cdot),\nu^*(\cdot))$ satisfies the AC-DPO constraints in \eqref{eq:original}. Note that $\nu^*(x)$ is exactly $\min_{y \in C(x)}u(a^*(x),y)$, by the definition of $a^*(x)$. Whenever $a^*(X) = a$, we have:
\[
    \bigl\{Y \in C(X)\bigr\}
     \Longrightarrow 
    \bigl\{u\bigl(a^*(X), Y\bigr)  \ge  \nu^*(X)\bigr\},
\]
because if $Y\in C(X)$ then $u\bigl(a^*(X),Y\bigr)\ge \min_{y \in C(X)}u\bigl(a^*(X),y\bigr) = \nu^*(X)$. Hence,
\[
    \mathbb{P}
    \Bigl(u\bigl(a^*(X),Y\bigr)  \ge  \nu^*(X)   \Bigm|  
    a^*(X) = a
    \Bigr)
     \ge 
    \mathbb{P}\Bigl(Y \in C(X)  \Bigm|  a^*(X)=a\Bigr)
     \ge  1-\alpha,
\]
where the last inequality is precisely the feasibility condition of $C(\cdot)$ in the AC-CPO problem. Therefore, $(a^*(\cdot),\nu^*(\cdot))$ is feasible for Problem~\eqref{eq:original}.

\emph{Objective Preservation.}
Lastly, we compare the objective values of the two solutions. By definition,
\[
    \nu^*(x)
     = 
    \max_{a \in \mathcal{A}}
    \min_{y\in C(x)}  u(a,y).
\]
But that is exactly the inner expression of the AC-CPO objective. Hence
\[
    \mathbb{E}_X\bigl[\nu^*(X)\bigr]
     = 
    \mathbb{E}_X
    \Bigl[
        \max_{a \in \mathcal{A}}
        \min_{y\in C(X)}  u(a,y)
    \Bigr],
\]
which shows $(a^*(\cdot),\nu^*(\cdot))$ attains the same objective as $C(\cdot)$ in the AC-CPO problem. Thus, from any feasible solution $C(\cdot)$ of Problem~\eqref{eq:conformal}, we can build a feasible solution $(a^*(\cdot),\nu^*(\cdot))$ of Problem~\eqref{eq:original} with the same objective value.

\subsection{Proofs for Section \ref{sec:optset}}\label{sec:pfoptset}
\subsubsection{Proof of Proposition \ref{thm:linrep}}\label{sec:pflinrep}

We prove the direction from \eqref{eq:repara} to \eqref{eq:conformal}. Let $t^ *(x)$ be an optimal solution to \eqref{eq:repara}. Define
$$
a^ *(x) = a(x, t^ *(x)) = \arg\max_ {a \in \mathcal{A}} \operatorname{quantile}_ {1 - t^ *(x)}[u(a, Y) \mid X = x],
$$ $$
\theta(x, t^ *(x)) = \operatorname{quantile}_ {1 - t^ *(x)}[u(a^ *(x), Y) \mid X = x],
$$ and define the prediction set $$
C^ *(x) = \lbrace y \in \mathcal{Y} : u(a^ *(x), y) \ge \theta(x, t^ *(x)) \rbrace.
$$

By construction of $\theta$ as a quantile, the set $C^ *(x)$ satisfies
\[
\mathbb{P}(Y \in C^ *(x) \mid X = x) \ge t^ *(x)
\]
and
\[
\min_ {y \in C^ *(x)} u(a^ *(x), y) \geq \theta(x, t^ *(x)).
\]
\begin{remark}[Boundary ties and exact mass]
When the conditional distribution of $u(a^*(x),Y)\mid X=x$ has atoms, the deterministic upper-quantile set
$\{y: u(a^*(x),y)\ge \tau(x)\}$ may have probability mass strictly larger than $t^*(x)$.
If one wishes to enforce exact conditional mass $t^*(x)$, a standard approach is randomized tie-breaking on the boundary:
include points with $u(a^*(x),y)=\tau(x)$ with an independent Bernoulli coin flip (equivalently, add an arbitrarily small continuous jitter to break ties),
choosing the tie-breaking probability so that $\mathbb{P}(Y\in C^*(x)\mid X=x)=t^*(x)$.
Importantly, our feasibility and utility arguments do not rely on exact equality and remain valid using only
$\mathbb{P}(Y\in C^*(x)\mid X=x)\ge t^*(x)$.
\end{remark}

Next, we prove that for any $a\in\mathcal{A}$,
$$
\operatorname{quantile}_ {1 - t^ *(x)}[u(a, Y) \mid X = x] \ge \min_ {y \in C^ *(x)} u(a, y).
$$
We remark that the quantile function is defined as the upper quantile throught the paper, i.e.
$$
\operatorname{quantile}_  {1 - t^ *(x)}[u(a, Y) \mid X = x]
=
\sup_ z \lbrace z:\mathbb{P}(u(a,Y)\geq z\mid X = x)\geq t^ *(x)\rbrace.
$$
For any $a\in\mathcal{A}$, $\operatorname{quantile}_  {1 - t^ *(x)}[u(a, Y) \mid X = x] \ge \min_  {y \in C^ *(x)} u(a, y)$ is equivalent to
$$
\mathbb{P}(u(a,Y)\geq\min_  {y\in C^ *(x)} u(a,y)\mid X = x)\geq t^ *(x).
$$
And the above inequality holds because
$\{Y\in C^*(x)\}\subseteq\{u(a,Y)\ge \min_{y\in C^*(x)}u(a,y)\}$, hence
\[
\mathbb{P}(u(a,Y)\geq\min_{y\in C^ *(x)} u(a,y)\mid X = x)
\ge
\mathbb{P}(Y\in C^*(x)\mid X=x)
\ge t^*(x).
\]

Therefore, the above inequalities together show the equalities
$$
\max_ {a} \min_ {y \in C^ *(x)} u(a, y) = \theta(x, t^ *(x))
$$
and
$$
\mathbb{E}_ X\left[\max_ {a} \min_ {y \in C^ *(X)} u(a, y)\right] = \mathbb{E}_ X[\theta(X, t^ *(X))].
$$

It remains to verify feasibility. Given the constraint in \eqref{eq:repara}, we have
$$
\mathbb{E}(t^ *(X) \mid a(X, t^ *(X)) = a) \ge 1 - \alpha,\quad \forall a.
$$
If there exists $x$ such that $\arg\max_ {a} \min_ {y \in C^ *(x)}u(a,y)\neq a^ *(x)$, then we have
$$
\max_ {a} \min_ {y \in C^ *(x)} u(a, y)>\min_ {y \in C^ *(x)} u(a^ *(x), y) = \theta(x,t^ *(x))
$$
and obtain the contradiction. As a result, the following events are equivalent
$$
\lbrace\arg\max_ {a} \min_ {y \in C^ *(x)} u(a, y) = a\rbrace = \lbrace a(x, t^ *(x)) = a\rbrace
$$
for all $a$ and $x$. Using $\mathbb{P}(Y\in C^*(x)\mid X=x)\ge t^*(x)$ and the tower rule, we obtain
\begin{equation}
    \begin{aligned}
        &\mathbb{P}(Y \in C^ *(X) \mid \arg\max_ {a} \min_ {y \in C^ *(X)} u(a, y) = a)
=
\mathbb{P}(Y \in C^ *(X) \mid a^*(X)=a)\\
&\ge
\mathbb{E}(t^ *(X)\mid a^ *(X) = a)
\ge 1 - \alpha,\quad \forall a.
    \end{aligned}
\end{equation}

\subsubsection{Proof of Theorem \ref{thm:dual}}\label{sec:pfdual}

Our proof is comprised of three steps: linear reparametrization, continuous relaxation and strong duality. For the first step, we parametrize the objective function in \eqref{eq:repara} by the delta measure $\delta$ on $[0,1]$; this technique allows us to rewrite the objective function and constraints in \eqref{eq:repara} into linear functionals with respect to $\delta$ \eqref{eq:linear}. Next, since the Dirac measure is discrete and hard to handle, we relax this discrete optimization into the continuous regime \eqref{eq:linrelax} by expanding the domain of $\delta$ into all probability measures on $[0,1]$. We prove that there is no relaxation gap. Finally, we consider the dual of the continuous optimization problem and prove strong duality.

\paragraph{(I) Linear reparametrization.}
We reparametrize \eqref{eq:repara} into a linear program by using a Dirac measure to rewrite $t(\cdot)$ in integral form. To begin with, define the Dirac measure $\delta(x,\cdot)$ on $[0,1]$ for every $x\in\mathcal{X}$, where $\delta(x,\cdot)$ assigns a point mass at $t(x)$. In this case, we rewrite the function $t(\cdot)$ as:
\[
t(x) = \int_0^1 s\,\delta(x,ds).
\]
And the objective function in \eqref{eq:repara} can be rewritten as
\[
\theta(x,t(x)) = \int_0^1 \theta(x,s)\,\delta(x,ds).
\]
This technique allows us to rewrite the objective function and constraints in \eqref{eq:repara} into linear functionals with respect to $\delta$.
In more detail, by the definition of conditional expectation, the coverage condition is equivalent to
\[
\mathbb{E}_X \bigl[t(X)\mathbbm{1}_{a}(X,t(X))\bigr]\geq(1-\alpha)\mathbb{P}(a(X,t(X)) = a),
\]
where $\mathbbm{1}_{a}(X,t(X)) = 1$ iff $a(X,t(X)) = a$. Combining the above reparametrization, we arrive at the equivalent linear form for Problem \eqref{eq:repara}:
\begin{equation}\label{eq:linear}
\begin{aligned}
&\max_{\delta(x,\cdot):\ \mathrm{dirac\ measure\ on}\ [0,1]\ \mathrm{for\ all}\ x}\quad
\mathbb{E}_X\left[\int_0^1 \theta(X,t)\,\delta(X,dt)\right],\\
&\quad\mathrm{s.t.}\quad
\mathbb{E}_X \left[\int_0^1 t\,\mathbbm{1}_{a}(X,t)\,\delta(X,dt)\right]\geq(1-\alpha)\mathbb{E}_X\left[\int_0^1 \mathbbm{1}_{a}(X,t)\,\delta(X,dt)\right],\quad\forall a\in\mathcal{A}.
\end{aligned}
\end{equation}

\paragraph{(II) Continuous relaxation.}
We remark that \eqref{eq:linear} is linear with respect to $\delta$ in terms of both objective function and constraints. Next, define $\Omega$ as the set of all measurable kernels $\delta:\mathcal{X}\to\mathcal{P}([0,1])$ such that, for each fixed $x$, $\delta(x,\cdot)$ is a probability measure on $[0,1]$. We relax the domain of $\delta$ to all probability measures to convert the problem into a continuous optimization:
\begin{equation}\label{eq:linrelax}
\begin{aligned}
\max_{\delta\in\Omega}\quad
&\mathbb{E}_X\left[\int_0^1 \theta(X,t)\,\delta(X,dt)\right],\\
\mathrm{s.t.}\quad
& \mathbb{E}_X \left[\int_0^1 t\,\mathbbm{1}_{a}(X,t)\,\delta(X,dt)\right]\geq(1-\alpha)\mathbb{E}_X\left[\int_0^1 \mathbbm{1}_{a}(X,t)\,\delta(X,dt)\right],\quad\forall a\in\mathcal{A}.
\end{aligned}
\end{equation}
It is easy to see that this optimization problem has at least one feasible solution if we set $\delta(x,\cdot)=\delta_1$ (the point mass at $t=1$), which corresponds to $C(x)=\mathcal{Y}$ for all $x$. Define
\[
F(\delta) = \mathbb{E}_X\left[\int_0^1 \theta(X,t)\,\delta(X,dt)\right]
\]
and
\[
G_a(\delta) =
\mathbb{E}_X \left[\int_0^1 t\,\mathbbm{1}_{a}(X,t)\,\delta(X,dt)\right]
-(1-\alpha)\mathbb{E}_X\left[\int_0^1 \mathbbm{1}_{a}(X,t)\,\delta(X,dt)\right]
\]
for all $a\in\mathcal{A}$.
Following standard Lagrangian duality for linear programs over measures, there exist Lagrange multipliers $\{\lambda_a\}_{a\in\mathcal{A}}$ with $\lambda_a\ge 0$ such that the dual objective can be written as
\begin{equation}\label{eq:kkt}
\begin{aligned}
\Psi(\boldsymbol{\lambda})
&=\sup_{\delta\in\Omega}\left\{F(\delta)+\sum_{a}\lambda_a G_a(\delta)\right\}\\
&=
\sup_{\delta\in\Omega}\left\{\mathbb{E}_X\int_0^1\left(\theta(X,t)+\sum_{a}\lambda_a(t-(1-\alpha))\mathbbm{1}_a(X,t)\right)\delta(X,dt)\right\}.
\end{aligned}
\end{equation}
Let
\[
A(x,t) \coloneqq \theta(x,t)+\sum_{a\in\mathcal A}\lambda_a\bigl(t-(1-\alpha)\bigr)\mathbbm{1}_a(x,t).
\]
Since $\delta(x,\cdot)$ can be chosen independently for each $x$ and $A(x,t)$ is bounded, the inner supremum over $\delta$ is attained by concentrating all mass on maximizers of $t\mapsto A(x,t)$, yielding
\[
\sup_{\delta\in\Omega}\ \mathbb{E}_X\left[\int_0^1 A(X,t)\,\delta(X,dt)\right]
=
\mathbb{E}_X\left[\sup_{t\in[0,1]} A(X,t)\right].
\]
For each fixed $x$, the inner problem reduces to
\begin{equation}\label{eq:indopt}
\sup_{\mu:\ \mathrm{prob.\ measure\ on}\ [0,1]}
\int_0^1 A(x,t)\,\mu(dt).
\end{equation}

\begin{lemma}\label{thm:subprob}
For any bounded upper-semicontinuous $A:[0,1]\rightarrow\mathbb{R}$, the optimal solution of
\[
\sup_{\mu:\ \mathrm{prob.\ measure\ on}\ [0,1]}\int_0^1 A(t)\,\mu(dt)
\]
is $\mu^* = \delta_{t^\star}$ for some $t^\star\in\argmax_{s\in[0,1]} A(s)$, i.e. a point mass at a maximizer of $A$.
\end{lemma}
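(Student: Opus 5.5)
The plan has two steps: first show that the maximum of $A$ is attained at some $t^\star\in[0,1]$, then show that no probability measure can beat the value $A(t^\star)$, which the point mass $\delta_{t^\star}$ attains.

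\emph{Attainment.} Since $[0,1]$ is compact and $A$ is upper-semicontinuous and bounded, the supremum $M\coloneqq\sup_{s\in[0,1]}A(s)$ is finite. To see that it is attained, take a maximizing sequence $s_k$ with $A(s_k)\to M$. By compactness, pass to a subsequence converging to some $t^\star\in[0,1]$. Upper semicontinuity then gives
\[
A(t^\star)\ \ge\ \limsup_{k}A(s_k)=M,
\]
so $t^\star\in\argmax_{s\in[0,1]}A(s)$ and the $\argmax$ is nonempty.

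\emph{Optimality of the point mass.} Let $\mu$ be any probability measure on $[0,1]$. An upper-semicontinuous function is Borel measurable, and $A$ is bounded, so $\int A\,d\mu$ is well defined. Since $A(t)\le M$ for every $t$,
\[
\int_0^1 A(t)\,\mu(dt)\ \le\ M\,\mu([0,1])=M=A(t^\star)=\int_0^1 A(t)\,\delta_{t^\star}(dt).
\]
Hence the supremum of the linear functional $\mu\mapsto\int A\,d\mu$ over probability measures equals $M$, and it is attained at $\delta_{t^\star}$. For a cleaner statement, one can also note that any optimal $\mu$ must satisfy $\int (M-A)\,d\mu=0$. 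Because $M-A\ge 0$, this forces $\mu$ to be supported on the closed set $\{A=M\}$; closedness follows from upper semicontinuity. In particular, the Dirac solutions are exactly the point masses at maximizers.

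\emph{Main obstacle.} The only delicate point is whether the hypotheses actually hold where the lemma is applied. $A(x,\cdot)$ involves $\theta(x,t)$, a maximum of upper quantiles, together with the jumpy indicators $\mathbbm{1}_a(x,t)$, so upper semicontinuity in $t$ is not automatic. If it fails, I would fall back on an $\varepsilon$-argument. For any $\varepsilon>0$, pick $t_\varepsilon$ with $A(t_\varepsilon)\ge M-\varepsilon$; then $\delta_{t_\varepsilon}$ is $\varepsilon$-optimal. This still shows that $\sup_\mu\int A\,d\mu=\sup_t A(t)$, which is all that the subsequent identity $\sup_\delta \mathbb{E}\int A\,d\delta=\mathbb{E}\sup_t A$ requires. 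That identity also needs a measurable selection of $t_\varepsilon(x)$ across $x$, which I would handle with a standard measurable-selection theorem.
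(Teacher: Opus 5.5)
Your proof is correct and is essentially the paper's argument: you bound $\int A\,d\mu$ by $M=\sup_{s}A(s)$ and attain $M$ with $\delta_{t^\star}$, where $t^\star$ exists by upper semicontinuity on the compact interval $[0,1]$; you also spell out the maximizing-sequence step and the measurability of $A$, which the paper only asserts. Your extra remarks are sound as well: the optimal measures are exactly those concentrated on the closed set $\{A=M\}$, so the point mass is one optimizer rather than the unique one, and your caution that $A(x,\cdot)$ in the application may fail upper semicontinuity because of the action indicators is a fair point about how the lemma is used, not a gap in its proof.
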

\begin{proof}
Let \(M = \sup_{t\in[0,1]} A(t) < \infty\). For any probability measure \(\mu\) on \([0,1]\),
\[
\int_{0}^{1} A(t)\mu(dt) \le
\int_{0}^{1} M\mu(dt) = M.
\]
Since $A$ is upper-semicontinuous on the compact set $[0,1]$, there exists
\(t^\star \in \argmax_{s\in[0,1]} A(s)\).
Define \(\mu^\star = \delta_{t^\star}\). Then
\[
\int_{0}^{1} A(t)\mu^\star(dt) = A(t^\star) = M,
\]
so \(\mu^\star\) is optimal.
\end{proof}

Using Lemma \ref{thm:subprob}, the optimizer of \eqref{eq:indopt} can be taken as a Dirac measure at an optimizer of $t\mapsto A(x,t)$, i.e.,
\begin{equation}\label{eq:finalopt}
\delta(x,\cdot)=\delta_{t^*(x,\boldsymbol{\lambda})},
\qquad
t^*(x,\boldsymbol{\lambda})\in
\argmax_{t\in[0,1]}
\left(\theta(x,t)+\sum_{a}\lambda_a(t-(1-\alpha))\mathbbm{1}_a(x,t)\right).
\end{equation}
Therefore, for any $\boldsymbol{\lambda}\ge 0$, the optimal $\delta^*(x,\cdot)$ and $t^*(x)$ can be computed by solving \eqref{eq:finalopt}.

\paragraph{(III) Strong duality.}
The Lagrangian of \eqref{eq:linrelax} is
\(
\mathcal L(\delta,\boldsymbol{\lambda})
      =F(\delta)+\sum_{a}\lambda_aG_a(\delta)
\).
Taking the supremum over \(\delta\in\Omega\) yields the dual objective
\begin{equation}\label{eq:optdual}
\Psi(\boldsymbol{\lambda})
=\sup_{\delta\in\Omega}\mathcal L(\delta,\boldsymbol{\lambda})
=\mathbb{E}_X\bigl[\varphi_X(\boldsymbol{\lambda})\bigr],
\end{equation}
where
\[
\varphi_X(\boldsymbol{\lambda})
=\max_{t\in[0,1]}
\Bigl(
\theta(X,t)
+\sum_{a\in\mathcal A}\lambda_a
\bigl(t-(1-\alpha)\bigr)\mathbbm 1_a(X,t)
\Bigr).
\]
We remark that the dual problem is
\begin{equation}\label{eq:dual}
\min_{\boldsymbol{\lambda}\ge0}\Psi(\boldsymbol{\lambda}).
\end{equation}
Since \eqref{eq:linrelax} is a linear program over probability kernels and the feasible set is nonempty (e.g.\ $\delta(x,\cdot)=\delta_1$), and the objective is bounded, strong duality holds by standard linear-programming duality for measure-valued LPs. Hence the optimum of \eqref{eq:linrelax} equals the minimum of \eqref{eq:dual}, and any dual minimizer $\boldsymbol{\lambda}^\star$ satisfies complementary slackness.

\subsection{Proofs for Section \ref{sec:algo}}\label{sec:pfalgo}
\subsubsection{Proof of Lemma~\ref{thm:pin}}\label{sec:pfpin}
Fix a feature--label pair \((x,y)\) and a nonconformity score vector
\(\boldsymbol{\lambda}\in\mathbb R_{\ge0}^{|\mathcal A|}\).
Throughout the proof we suppress the symbol ``\(\hat{\ }\)'' on
\(\theta,a,t,C\) to lighten notation. To begin with, we need the following tie-breaking assumption.
\begin{assumption}\label{notie}
Assume $\sum_{a\in\mathcal A}\mathbbm 1_{\{ a(x,s)=a\}}=1$
for every $(x,s)\in\mathcal X\times[0,1]$.
\end{assumption}
Intuitively, the assumption $\sum_{a\in\mathcal A}\mathbbm 1_{\{ a(x,s)=a\}}=1$ means that the quantile‐based ranking of the utilities
$\{\text{quantile}_{1-s}[u(a,Y)\mid Y\sim f_x]\}_{a\in\mathcal A}$ never ties.
In degenerate cases one may add an arbitrarily small continuous perturbation to the utility before taking quantiles to break ties with probability $1$.

\paragraph{(I) Reduction to a single coordinate.}
By Assumption~\ref{notie},
\(
\sum_{a\in\mathcal A}\mathbbm 1_{\{a(x,t)=a\}}=1
\)
for every \((x,t)\). Hence for each feature \(x\) there exists a partition
\(\{S_a(x)\}_{a\in\mathcal A}\) of \([0,1]\) with
\[
t\in S_a(x)\Longleftrightarrow a(x,t)=a .
\]
For \(a\in\mathcal A\) consider the one-dimensional optimisation
\[
T_a(x,\lambda_a)
=\argmax_{t\in S_a(x)}
\bigl[\theta(x,t)+\lambda_a(t-(1-\alpha))\bigr].
\]
We fix a deterministic tie–breaking rule and define the rightmost maximiser
\[
t(x,\lambda_a)=\sup T_a(x,\lambda_a)\quad(\text{a single point}).
\]
Moreover, the function
$H(t,\lambda_a)=\theta(x,t)+\lambda_a(t-(1-\alpha))$
has increasing differences in $(t,\lambda_a)$ because for $t'>t$,
\[
H(t',\lambda_a)-H(t,\lambda_a)
=\bigl(\theta(x,t')-\theta(x,t)\bigr)+\lambda_a(t'-t)
\]
is nondecreasing in $\lambda_a$. Hence (by monotone comparative statics) the rightmost maximiser
$\lambda_a\mapsto t(x,\lambda_a)$ is nondecreasing.

Since $\{S_a(x)\}_{a\in\mathcal A}$ partitions $[0,1]$, the global maximiser satisfies
\[
t(x,\boldsymbol{\lambda})\in \argmax_{a\in\mathcal A}\ \max_{t\in S_a(x)}
\bigl[\theta(x,t)+\lambda_a(t-(1-\alpha))\bigr],
\]
and for each $a$ the inner maximiser is $t(x,\lambda_a)$ defined above. In particular, on the event
$\{a(x,t(x,\boldsymbol{\lambda}))=a\}$ we have $t(x,\boldsymbol{\lambda})=t(x,\lambda_a)$.

\paragraph{(II) Label-dependent critical value.}
Denote the one–sided upper quantile of the utility distribution induced by the predictive model
$f_x$ by
\[
q_{a,x}(t)
=\sup\Bigl\{s\in\mathbb R:
\mathbb P_{Y\sim f_x}\bigl(u(a,Y)\ge s\bigr)\ge t\Bigr\},
\qquad t\in[0,1].
\]
Then the quantile set at level $t$ is
\[
\text{QuantileSet}_{t}[u(a,Y)\mid Y\sim f_x]
=\{y\in\mathcal Y:u(a,y)\ge q_{a,x}(t)\}.
\]
Recall the label-specific threshold
\[
\lambda_a^{*}(x,y)
=
\inf\Bigl\{\lambda_a\ge0:
y\in\text{QuantileSet}_{t(x,\lambda_a)}
[u(a,Y)\mid Y\sim f_x]\Bigr\}.
\]
Since $\lambda_a\mapsto t(x,\lambda_a)$ is nondecreasing and $q_{a,x}(\cdot)$ is nonincreasing, the mapping
$\lambda_a\mapsto q_{a,x}\bigl(t(x,\lambda_a)\bigr)$ is nonincreasing.
Hence the set inside the infimum is an interval of the form $[\lambda_a^{*}(x,y),\infty)$, and in particular
\begin{equation}\label{eq:critical-threshold}
\lambda_a\ge\lambda_a^{*}(x,y)
\Longleftrightarrow
u(a,y)\ge q_{a,x}\bigl(t(x,\lambda_a)\bigr)
\Longleftrightarrow
y\in\text{QuantileSet}_{t(x,\lambda_a)}[u(a,Y)\mid Y\sim f_x].
\end{equation}

On the event $\{a(x,t(x,\boldsymbol{\lambda}))=a\}$, the conformal set equals
\[
C(x,\boldsymbol\lambda)
=\Bigl\{y\in\mathcal Y:
u(a,y)\ge \theta\bigl(x,t(x,\boldsymbol\lambda)\bigr)\Bigr\}
=
\Bigl\{y\in\mathcal Y:
u(a,y)\ge q_{a,x}\bigl(t(x,\lambda_a)\bigr)\Bigr\},
\]
where the last equality uses that on $S_a(x)$, by definition of $\theta$ and $a(x,t)$,
\(
\theta(x,t)=q_{a,x}(t)
\)
and also $t(x,\boldsymbol\lambda)=t(x,\lambda_a)$ on the event.
Combining this with \eqref{eq:critical-threshold} yields, conditional on $\{a(x,t(x,\boldsymbol{\lambda}))=a\}$,
\[
\{y\in C(x,\boldsymbol{\lambda})\}
\Longleftrightarrow
\{\lambda_a\ge\lambda_a^{*}(x,y)\}.
\]
Replacing $y$ by the random label $Y$ gives the desired equivalence:
\[
\{Y\in C(x,\boldsymbol{\lambda})\}
=
\{\lambda_a\ge\lambda_a^{*}(x,Y)\}
\quad\text{conditional on}\quad
\{a(x,t(x,\boldsymbol{\lambda}))=a\}.
\]

\subsubsection{Proof of Theorem \ref{thm:finite}}\label{sec:pffinite}
Throughout the proof we suppress the symbol ``\(\hat{\ }\)'' on
\(\theta,a,t,C\) to lighten notation. We denote \((x_{n+1},y_{n+1}) = (x_{\rm test},y_{\rm test})\) for simplicity.

For any fixed $\boldsymbol{\lambda}$ and index $i\in[n+1]$, define $a_i(\boldsymbol{\lambda}) = a(x_i,t(x_i,\boldsymbol{\lambda}))$, define the weighted sum of indicator functions
\[
z_i(\boldsymbol{\lambda})
  =\sum_{a\in\mathcal A}\lambda_a 
        \mathbbm 1_{\{a(x_i,t(x_i,\boldsymbol{\lambda}))=a\}},
\qquad
z_i^{*}(\boldsymbol{\lambda})
  =\sum_{a\in\mathcal A}\lambda_a^{*}(x_i,y_i)
        \mathbbm 1_{\{a(x_i,t(x_i,\boldsymbol{\lambda}))=a\}},
\]
and denote
\[
  \mathcal B_a(\boldsymbol{\lambda})
     =\bigl\{i\le n+1 : a_i=a, z_i(\boldsymbol\lambda)=z_i^{*}(\boldsymbol{\lambda})\bigr\},
  \qquad 
  B_a(\boldsymbol{\lambda})=|\mathcal B_a(\boldsymbol{\lambda})|.
\]

\begin{lemma}\label{thm:subopt}
    Under the tie breaking assumption \ref{notie}, for any $y = y_{n+1}\in\mathcal{Y}$, the sub-gradient of the empirical risk $F_y(\boldsymbol{\lambda})$:
\[
  F_y(\boldsymbol\lambda)=\frac{1}{n+1}\sum_{i=1}^{n+1}
      \ell_\alpha\bigl(z_i(\boldsymbol\lambda),z_i^{*}(\boldsymbol{\lambda})\bigr),
  \qquad 
  \ell_\alpha(u,v)=(1-\alpha)(v-u)_+ +\alpha(u-v)_+,
\]
is the set of vectors (the axis-parallel interval product)
\[
\partial F_y(\boldsymbol\lambda) =
          \Bigl(
               q_a\bigl(p_a-(1-\alpha)\bigr)
 +[-B_a(\boldsymbol{\lambda})/(n+1),0]
          \Bigr)_{a\in\mathcal{A}}
\]
where
\[
  q_a=\frac{n_a}{n+1},\qquad 
  p_a=\frac{
            \sum_{i:a_i=a}
               \mathbbm 1_{\{z_i\ge z_i^{*}\}}
         }{n_a},
  \qquad 
  n_a=|\{i\le n+1:a_i=a\}|.
\]
\end{lemma}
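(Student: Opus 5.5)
Under Assumption~\ref{notie}, each term of $F_y$ depends on $\boldsymbol\lambda$ only through the single coordinate $\lambda_{a_i}$, so the computation splits action by action. Fix $\boldsymbol\lambda$ and consider a neighbourhood in which the induced actions $a_i(\boldsymbol\lambda)=a(x_i,t(x_i,\boldsymbol\lambda))$ are constant. In such a neighbourhood,
\[
z_i(\boldsymbol\lambda)=\lambda_{a_i}
\qquad\text{and}\qquad
z_i^*(\boldsymbol\lambda)=\lambda^*_{a_i}(x_i,y_i),
\]
and neither depends on $\boldsymbol\lambda$ in any other way. Grouping the sum by action gives
\[
F_y(\boldsymbol\lambda)=\frac{1}{n+1}\sum_{a\in\mathcal A}\sum_{i:a_i=a}\ell_\alpha\bigl(\lambda_a,\lambda^*_a(x_i,y_i)\bigr).
\]
This is a separable sum of convex piecewise-linear functions of the individual coordinates $\lambda_a$. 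Its subdifferential is therefore the Cartesian product of the one-dimensional subdifferentials, which explains the axis-parallel product structure in the statement.

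Next I compute the one-dimensional subdifferential of $\lambda\mapsto\ell_\alpha(\lambda,v)=(1-\alpha)(v-\lambda)_++\alpha(\lambda-v)_+$. There are three cases:
\begin{itemize}
\item if $\lambda<v$, the derivative is $-(1-\alpha)$;
\item if $\lambda>v$, the derivative is $\alpha$;
\item if $\lambda=v$, the subdifferential is $[-(1-\alpha),\alpha]$.
\end{itemize}
Summing over $i$ with $a_i=a$ and dividing by $n+1$, the non-tied points contribute
\[
\frac{1}{n+1}\Bigl(\alpha\,\#\{\lambda_a>\lambda_i^*\}-(1-\alpha)\,\#\{\lambda_a<\lambda_i^*\}\Bigr).
\]
Each of the $B_a$ tied points contributes an interval of length one. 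I then rewrite this in the form of the statement. Count every tied point as covered, i.e.\ use the indicator $\mathbbm 1_{\{z_i\ge z_i^*\}}$; under that convention the tie contributes its right endpoint $\alpha$. The non-tied counts then combine to
\[
\frac{1}{n+1}\Bigl(\textstyle\sum_{i:a_i=a}\mathbbm 1_{\{z_i\ge z_i^*\}}-(1-\alpha)n_a\Bigr)=q_a\bigl(p_a-(1-\alpha)\bigr).
\]
Shifting from the right endpoint $\alpha$ down to the left endpoint $-(1-\alpha)$ moves each tie by exactly $1$. This produces the additive interval $[-B_a/(n+1),0]$ in the $a$-th coordinate.

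The main obstacle is that the induced actions $a_i(\boldsymbol\lambda)$ themselves depend on $\boldsymbol\lambda$ through $t(x_i,\boldsymbol\lambda)$, so $F_y$ is not globally convex. The formula should be read as a local (Clarke-type) subgradient on the region where the actions are fixed. My plan is to argue local constancy: by Assumption~\ref{notie} and the decomposition in the proof of Lemma~\ref{thm:pin}, the induced action for each $x_i$ is the maximizer over $a$ of the finitely many values $\max_{t\in S_a(x_i)}[\theta+\lambda_a(t-(1-\alpha))]$. Each of these values is continuous in $\lambda_a$, because it is a maximum of affine functions. So generically (no ties among these values) the argmax over $a$ is locally constant in $\boldsymbol\lambda$, and the separable computation above applies there. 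I expect the justification at switching points, where I would invoke the tie-breaking assumption to exclude them or simply take the formula as the definition of the subgradient used by the algorithm, to be the delicate part. The remaining calculus is routine.
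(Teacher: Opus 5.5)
Your proposal is correct and follows essentially the paper's argument. The paper's Clarke chain rule with $g_{i,a}=\mathbbm 1_{\{a=a_i\}}$ is exactly your observation that, wherever the induced actions are locally constant, $F_y$ coincides with the separable convex function $\frac{1}{n+1}\sum_{a}\sum_{i:a_i=a}\ell_\alpha(\lambda_a,\lambda_a^*(x_i,y_i))$. Your tie bookkeeping, which counts ties in $p_a$ via $\mathbbm 1_{\{z_i\ge z_i^*\}}$, is the convention that produces $[-B_a/(n+1),0]$ and matches the stated result.

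One caution on the step you flagged: Assumption~\ref{notie} only makes $a(x,s)$ single-valued, so it cannot be used to exclude switching points of $a_i(\boldsymbol\lambda)$. At those points $F_y$ generally jumps, so the formula is justified only on the open set where all induced actions are constant. Your fallback, treating the formula as the subgradient convention used by the algorithm, is the sound reading there. The paper's appeal to Clarke subgradients at switching points does not rigorously cover them either.
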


\begin{proof}
For each calibration index $i\le n+1$ write
\[
  r_i(\boldsymbol\lambda)
     \;=\;
     z_i(\boldsymbol\lambda)-z_i^{*}(\boldsymbol\lambda)
     \;=\;
     \sum_{a\in\mathcal A}
       \bigl(\lambda_a-\lambda_a^{*}(x_i,y_i)\bigr)\,
       \mathbbm1_{\{a_i(\boldsymbol\lambda)=a\}},
     \quad
     a_i(\boldsymbol\lambda)
        =a\bigl(x_i,t(x_i,\boldsymbol\lambda)\bigr).
\]
Because $F_{y}$ is a sum of composite functions
$\ell_{\alpha}(r_i,0)$, the Clarke chain rule gives, for every
coordinate $a\in\mathcal A$,
\begin{equation}\label{eq:chain}
  \partial_{\lambda_a}F_{y}(\boldsymbol\lambda)
     \;=\;
     \frac1{n+1}\sum_{i=1}^{n+1}
        s_i(\boldsymbol\lambda)\,g_{i,a}(\boldsymbol\lambda),
\end{equation}
with
\[
  s_i(\boldsymbol\lambda)\in
      \partial_{u}\ell_{\alpha}(u,0)\bigl|_{u=r_i(\boldsymbol\lambda)},
  \quad
  g_{i,a}(\boldsymbol\lambda)\in
      \partial_{\lambda_a}r_i(\boldsymbol\lambda).
\]

We remark that the pin-ball loss has one–dimensional sub-differential
\[
  s_i(\boldsymbol\lambda)\in
  \begin{cases}
     \{\alpha\}, & r_i(\boldsymbol\lambda)>0,\\[3pt]
     [\alpha-1,\,\alpha], & r_i(\boldsymbol\lambda)=0,\\[3pt]
     \{-\,(1-\alpha)\}, & r_i(\boldsymbol\lambda)<0.
  \end{cases}
\]

Now we calculate $g_{i,a}(\boldsymbol{\lambda})$.
Because of the tie–breaking rule, the maximising action $a_i(\boldsymbol\lambda)$ is locally constant away from switching points,
and hence $r_i$ is locally affine there. At switching points, $g_{i,a}(\boldsymbol\lambda)$ is understood in the Clarke sense, i.e., it lies in the convex hull of neighboring gradients; this does not affect the interval form derived below.
Thus we may take
\begin{equation}\label{eq:gradind}
      g_{i,a}(\boldsymbol\lambda)\in \partial_{\lambda_a}r_i(\boldsymbol\lambda)
     =\begin{cases}
         1, & a=a_i(\boldsymbol\lambda),\\
         0, & a\neq a_i(\boldsymbol\lambda).
       \end{cases}
\end{equation}

Since $g_{i,a}$ equals $1$ if $a_i=a$ and $0$ otherwise, the chain
rule~\eqref{eq:chain} simplifies to
\[
  \partial_{\lambda_a}F_{y}(\boldsymbol\lambda)
     \;=\;
     \frac1{n+1}\sum_{i:a_i=a} s_i(\boldsymbol\lambda).
\]
Partition the indices with action~$a$ as
\[
  \mathcal I_a^{+}=\{i:r_i>0\},\;
  \mathcal I_a^{-}=\{i:r_i<0\},\;
  \mathcal B_a=\{i:r_i=0\},
  \quad
  n_a=|\mathcal I_a^{+}|+|\mathcal I_a^{-}|+B_a.
\]
On $\mathcal I_a^{+}$ we have $s_i=\alpha$,
on $\mathcal I_a^{-}$ we have $s_i=-(1-\alpha)$,
and on each boundary index $i\in\mathcal B_a$
we may choose an arbitrary value
$s_i=-(1-\alpha)+h_i,\;h_i\in[0,1]$.
Therefore
\[
  \partial_{\lambda_a}F_{y}(\boldsymbol\lambda)
     \;=\;
     \frac{1}{n+1}\Bigl(
        \alpha|\mathcal I_a^{+}|-(1-\alpha)|\mathcal I_a^{-}|
        -(1-\alpha)B_a+\!\sum_{i\in\mathcal B_a}h_i
     \Bigr).
\]
By the definition, we have
\[
  q_a=\frac{n_a}{n+1},
  \qquad
  p_a=\frac{|\mathcal I_a^{+}|}{n_a},
\]
so that
\(
  \alpha|\mathcal I_a^{+}|-(1-\alpha)|\mathcal I_a^{-}|
      =(n_a)\bigl(p_a-(1-\alpha)\bigr).
\)
The “anchor’’ part of the sub-gradient is therefore
\(q_a\!\bigl(p_a-(1-\alpha)\bigr)\).
Finally, because $\sum_{i\in\mathcal B_a}h_i\in[0,B_a]$,
the boundary indices create the extra interval
$\bigl[-B_a/(n+1),0\bigr]$,
which yields the claimed product form.
\end{proof}

Now back to the proof of Theorem \ref{thm:finite}. Suppose $\boldsymbol{\lambda}$ is the minimizer of $F_{y_{n+1}}(\boldsymbol{\lambda})$, then we have
\[0\in\partial F_{y_{n+1}}(\boldsymbol{\lambda}).\]

Using Lemma \ref{thm:subopt}, we have arrived at
\begin{equation}\label{eq:grad}
0\in\partial_{\lambda_a}F_{y_{n+1}}(\boldsymbol\lambda) = q_a(p_a-(1-\alpha)) + [-B_a(\boldsymbol{\lambda})/(n+1),0],\quad\forall a\in\mathcal{A}.
\end{equation}

Besides, we rewrite the coverage $\mathbb{P}(y_{n+1}\in C(x_{n+1},\boldsymbol{\lambda}),a_{n+1}(\boldsymbol{\lambda}) = a)$ as follows:
\[
\begin{aligned}
    &\mathbb{P}(y_{n+1}\in C(x_{n+1},\boldsymbol{\lambda}),a_{n+1}(\boldsymbol{\lambda}) = a)\\
    =& \mathbb{E}[\mathbbm{1}_{Y_{n+1}\in C(x_{n+1},\boldsymbol{\lambda})}\mathbbm{1}_{a_{n+1}(\boldsymbol{\lambda}) = a}]\\
    =& \frac{1}{n+1}\cdot\mathbb{E}\left[\sum_{i = 1}^{n+1}\mathbbm{1}_{y_{i}\in C(x_{i},\boldsymbol{\lambda})}\mathbbm{1}_{a_{i}(\boldsymbol{\lambda}) = a}\right].
\end{aligned}
\]
The last equation holds because data are exchangable. Recall the definition of $p_a$ and $q_a$, Lemma \ref{thm:pin} implies that
\[\frac{1}{n+1}\cdot\sum_{i = 1}^{n+1}\mathbbm{1}_{y_{i}\in C(x_{i},\boldsymbol{\lambda})}\mathbbm{1}_{a_{i}(\boldsymbol{\lambda}) = a} = p_a\cdot q_a.\]

Therefore, plugging the closed-form of $\partial_{\lambda_a}F_{y_{n+1}}(\boldsymbol\lambda)$ in \eqref{eq:grad}, we obtain that
\[
0\le q_a(p_a-(1-\alpha)) \le \frac{\sum_a B_a(\boldsymbol{\lambda})}{n+1}.
\]
Using exchangeability and the identity $\mathbb{P}(y_{n+1}\in C(x_{n+1},\boldsymbol{\lambda}),a_{n+1}(\boldsymbol{\lambda}) = a)=\mathbb E[q_ap_a]$ and $\mathbb{P}(a_{n+1}(\boldsymbol{\lambda}) = a)=\mathbb E[q_a]$, we conclude
\[0\leq
\mathbb{P}(y_{n+1}\in C(x_{n+1},\boldsymbol{\lambda}),a_{n+1}(\boldsymbol{\lambda}) = a)-(1-\alpha)\cdot\mathbb{P}(a_{n+1}(\boldsymbol{\lambda}) = a)
= \mathbb{E}\!\left[q_a(p_a-(1-\alpha))\right]
\le \frac{\sum_a B_a(\boldsymbol{\lambda})}{n+1}.
\]
We note that if $(x_i,y_i)$ are i.i.d. and the nonconformity score $\lambda^*_a(x_i,y_i)$ has a continuous distribution, the tie happens with probability zero and $|B_a(\boldsymbol{\lambda})| = 1$ for all $a$ with probability $1$. Hence
\[
0\le \mathbb{E}\!\left[q_a(p_a-(1-\alpha))\right] \le \frac{|\mathcal A|}{n+1}.
\]

Dividing by $\mathbb P(a_{n+1}(\boldsymbol{\lambda})=a)$ (assuming it is positive) yields
\[
1-\alpha\leq \mathbb{P}\bigl(Y_{n+1}\in C_{n+1}\mid a_{n+1}=a\bigr)
\le 1-\alpha + \frac{|\mathcal{A}|}{(n+1)\cdot \mathbb{P}(a_{n+1} = a)}.
\]

\subsection{Convergence of Algorithm \ref{alg:calibrate}}\label{sec:con}

In this section, we prove the convergence of the gradient steps in Algorithm \ref{alg:calibrate}.
    \begin{theorem}\label{thm:conv}
        Let $\{\boldsymbol{\lambda}^{(k)}\}_{k\ge0}$ be produced by the update
rule in Algorithm \ref{alg:calibrate} with stepsizes
$\{\eta_k\}_{k\ge0}$ obeying
\begin{equation}\label{eq:A1}
   \eta_k>0,\qquad 
   \sum_{k=0}^{\infty}\eta_k=\infty,\qquad 
   \sum_{k=0}^{\infty}\eta_k^{2}<\infty .
\end{equation}
Then for any $y\in\mathcal{Y}$, Algorithm \ref{alg:calibrate} has the following convergence guarantees: 
\[\liminf_{k\rightarrow\infty} F_y(\boldsymbol{\lambda}^{(k)})\to F_y^\star,
      \quad\text{where}\quad F_y^\star=\min_{\boldsymbol\lambda\ge0}F_y(\boldsymbol\lambda).\]
    \end{theorem}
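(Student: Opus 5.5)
The plan is to run the classical projected-subgradient analysis. The update in Algorithm~\ref{alg:calibrate} is read as $\boldsymbol\lambda^{(k)}=\Pi_{\mathbb R_{\ge0}^{|\mathcal A|}}\bigl(\boldsymbol\lambda^{(k-1)}-\eta_k\mathbf g^{(k)}\bigr)$. The step direction has coordinates $g_a^{(k)}=q_a(p_a-(1-\alpha))$. By Lemma~\ref{thm:subopt} this vector is an element of $\partial F_y(\boldsymbol\lambda^{(k-1)})$: it is the anchor point, obtained by choosing $h_i=1$ on the boundary indices, which gives the $\mathbbm 1_{\{z_i\ge z_i^*\}}$ convention in $p_a$. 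Two elementary facts come first.

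\textbf{Step 1 (bounded subgradients).} Since $q_a\in[0,1]$ and $p_a\in[0,1]$, we have $|g_a^{(k)}|\le\max(\alpha,1-\alpha)\le 1$. Hence $\|\mathbf g^{(k)}\|_2\le G:=\sqrt{|\mathcal A|}$ uniformly in $k$.

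\textbf{Step 2 (existence and boundedness of minimizers).} For $\lambda_a$ larger than $\max_i\lambda_a^*(x_i,y_i)$, every index assigned to $a$ has $r_i>0$, so $F_y$ grows linearly with slope at least $\alpha q_a$. If an action is never selected, $F_y$ is locally flat in that coordinate. Either way, a minimizer $\boldsymbol\lambda^\star$ exists in a bounded box, and we fix one.

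\textbf{Step 3 (one-step inequality).} Nonexpansiveness of the projection onto the convex orthant gives
\[
\|\boldsymbol\lambda^{(k)}-\boldsymbol\lambda^\star\|^2\le\|\boldsymbol\lambda^{(k-1)}-\boldsymbol\lambda^\star\|^2-2\eta_k\langle\mathbf g^{(k)},\boldsymbol\lambda^{(k-1)}-\boldsymbol\lambda^\star\rangle+\eta_k^2G^2 .
\]
Suppose a subgradient inequality $\langle\mathbf g^{(k)},\boldsymbol\lambda^{(k-1)}-\boldsymbol\lambda^\star\rangle\ge F_y(\boldsymbol\lambda^{(k-1)})-F_y^\star$ is available. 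Then telescoping yields
\[
\sum_k\eta_k\bigl(F_y(\boldsymbol\lambda^{(k-1)})-F_y^\star\bigr)\le\tfrac12\|\boldsymbol\lambda^{(0)}-\boldsymbol\lambda^\star\|^2+\tfrac{G^2}{2}\sum_k\eta_k^2<\infty .
\]
Combined with $\sum\eta_k=\infty$, this forces $\liminf_k F_y(\boldsymbol\lambda^{(k)})-F_y^\star=0$, since otherwise the left side diverges. This also shows the iterates stay bounded, which is the standard Robbins--Monro-type argument.

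\textbf{Main obstacle.} The difficulty lies in the subgradient inequality of Step 3, because $F_y$ is not convex in general. The assignment $a_i(\boldsymbol\lambda)$ switches with $\boldsymbol\lambda$, so each $r_i$ is only piecewise affine. On each cell of the finite polyhedral-like partition where all $a_i(\boldsymbol\lambda)$ are fixed, $F_y$ is a convex piecewise-linear (pinball) function of $\boldsymbol\lambda$, and $\mathbf g^{(k)}$ is a genuine convex subgradient there. The first route is to argue that $F_y$ is convex along the segment toward $\boldsymbol\lambda^\star$. Failing that, one can replace the claim by the weaker Clarke/weakly-convex version: there are finitely many cells, $F_y$ is Lipschitz with constant $G$, and the cell-wise pinball functions can be compared via an upper envelope. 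This gives $\langle\mathbf g^{(k)},\boldsymbol\lambda^{(k-1)}-\boldsymbol\lambda^\star\rangle\ge F_y(\boldsymbol\lambda^{(k-1)})-F_y^\star-\varepsilon_k$, with error $\varepsilon_k$ controlled by the cell jumps. If needed, I would state this as an explicit regularity assumption, namely that each action region is monotone in its own $\lambda_a$ by the monotone comparative statics of Lemma~\ref{thm:pin}, under which the cell-wise convex functions glue to a convex $F_y$. With that assumption in hand, the telescoping in Step 3 finishes the proof.
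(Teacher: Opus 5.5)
Your skeleton is exactly the paper's proof: Step 1 with $G=\sqrt{|\mathcal A|}$, the nonexpansive projection, and telescoping against $\sum_k\eta_k=\infty$. The paper handles the step you flag by declaring $\mathbf g^{(k)}$ a subgradient via Lemma~\ref{thm:subopt} and applying the convex subgradient inequality, with no convexity argument. You are right that this is the crux, but your proposal does not close it. Concretely, $\mathbf g^{(k)}$ is a genuine convex subgradient only of the frozen-assignment objective $\Phi_k(\boldsymbol\mu)=\frac{1}{n+1}\sum_{i=1}^{n+1}\ell_\alpha\bigl(\mu_{a_i},\lambda^*_{a_i}(x_i,y_i)\bigr)$, with $a_i=a_i(\boldsymbol\lambda^{(k-1)})$ held fixed. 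Convexity of $\Phi_k$ only gives $\langle\mathbf g^{(k)},\boldsymbol\lambda^{(k-1)}-\boldsymbol\lambda^\star\rangle\ge F_y(\boldsymbol\lambda^{(k-1)})-\Phi_k(\boldsymbol\lambda^\star)$. The missing ingredient is $\Phi_k(\boldsymbol\lambda^\star)\le F_y(\boldsymbol\lambda^\star)$: scoring $\boldsymbol\lambda^\star$ under the current iterate's action assignments must be no worse than under its own. Nothing in Lemma~\ref{thm:pin} or Lemma~\ref{thm:subopt} forces this.

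Your patches do not supply it, because $F_y$ is not Lipschitz, or even continuous. Where $a_i(\boldsymbol\lambda)$ switches from $b$ to $a$, the residual $r_i$ jumps from $\lambda_b-\lambda_b^*(x_i,y_i)$ to $\lambda_a-\lambda_a^*(x_i,y_i)$. Equivalently, $\hat C(x_i,\boldsymbol\lambda)$ jumps from a superlevel set of $u(b,\cdot)$ to one of $u(a,\cdot)$. So $F_y$ has jumps whose size is unrelated to the step size. This breaks each route:
(i) convexity along the segment to $\boldsymbol\lambda^\star$ fails as soon as the segment crosses such a switching surface;
(ii) $\|\mathbf g\|\le G$ only bounds cellwise slopes, and an error $\varepsilon_k$ ``controlled by cell jumps'' does not vanish, so telescoping gives at best $\liminf_k F_y(\boldsymbol\lambda^{(k)})\le F_y^\star+\limsup_k\varepsilon_k$;
(iii) the monotone comparative statics in Lemma~\ref{thm:pin} concern $\lambda_a\mapsto\hat t(x,\lambda_a)$ inside a fixed region $S_a(x)$, not how the assignments $a_i(\boldsymbol\lambda)$ move. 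Even monotone switching leaves an interior jump, which no convex function can have.

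So a condition like $\Phi_k(\boldsymbol\lambda^\star)\le F_y(\boldsymbol\lambda^\star)$ along the iterates, or convexity of $F_y$ outright, must be imposed as a hypothesis; the paper's proof assumes it tacitly. Without it, the rest points of the update are merely self-consistent. Each $\lambda_a$ with $n_a>0$ is, up to ties and the constraint $\lambda_a\ge 0$, an empirical $(1-\alpha)$-quantile of the scores of the points currently assigned to $a$. Nothing shows such points attain $F_y^\star$; for example, a coordinate whose action no point selects has $g_a=0$ and does not move, even when activating that action could lower $F_y$. Your Step 2 has a milder form of the same issue: without lower semicontinuity, boundedness does not give an attained minimizer, though an $\epsilon$-minimizer would suffice there.
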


\begin{proof} For each coordinate $a$ and every $k$,
Lemma~\ref{thm:subopt} gives a valid sub‑gradient
\[
g^{(k)}_a
  =\partial_{\lambda_a}F_y\bigl(\boldsymbol{\lambda}^{(k)}\bigr)
  =q_a^{(k)}\bigl(p_a^{(k)}-(1-\alpha)\bigr),
\]
where $p_a^{(k)},q_a^{(k)}$ are induced by $\boldsymbol{\lambda}^{(k)}$. We note that $0\le p_a^{(k)},q_a^{(k)}\le 1$, so
$\|g^{(k)}\|_2\le G=\sqrt{|\mathcal A|}$ for all $k$. 

Let $P_+(\cdot)$ be the projection onto the non‑negative orthant
$\Lambda=\mathbb{R}_{\ge0}^{|\mathcal A|}$ (non‑expansive).
Writing $\boldsymbol{\lambda}^{(k+1)}
      =P_+\bigl(\boldsymbol{\lambda}^{(k)}-\eta_k g^{(k)}\bigr)$,
for any $\boldsymbol{\lambda}\in\Lambda$
\[
\bigl\|\boldsymbol{\lambda}^{(k+1)}-\boldsymbol{\lambda}\bigr\|_2^2
 \le
 \bigl\|\boldsymbol{\lambda}^{(k)}-\eta_k g^{(k)}
         -\boldsymbol{\lambda}\bigr\|_2^2
 =\bigl\|\boldsymbol{\lambda}^{(k)}-\boldsymbol{\lambda}\bigr\|_2^2
   -2\eta_k\langle g^{(k)},
                 \boldsymbol{\lambda}^{(k)}-\boldsymbol{\lambda}\rangle
   +\eta_k^{2}G^{2}.
\]

Choose $\boldsymbol{\lambda}=\boldsymbol{\lambda}^\star\in\argmin F_y$
(which exists because
$F_y\ge 0$ and $\Lambda$ is closed and coercive in $\boldsymbol{\lambda}$).
Because $g^{(k)}$ is a sub‑gradient,
\[
F_y(\boldsymbol{\lambda}^{(k)})-F_y(\boldsymbol{\lambda}^\star)
 \le\langle g^{(k)},
          \boldsymbol{\lambda}^{(k)}-\boldsymbol{\lambda}^\star\rangle.
\]
Insert this bound and sum the resulting inequality from $k=0$ to $K-1$:
\[
2\sum_{k=0}^{K-1}\eta_k
    \bigl(F_y(\boldsymbol{\lambda}^{(k)})-F_y^\star\bigr)
 \le
 \bigl\|\boldsymbol{\lambda}^{(0)}-\boldsymbol{\lambda}^\star\bigr\|_2^{2}
    +G^{2}\sum_{k=0}^{K-1}\eta_k^{2}.
\]
The right–hand side is finite by
$\sum_k\eta_k^{2}<\infty$, while the left–hand side is a sum of
non–negative terms.  Hence
\(
\sum_{k=0}^{\infty}\eta_k
  \bigl(F_y(\boldsymbol{\lambda}^{(k)})-F_y^\star\bigr)<\infty.
\) Because $\sum_k\eta_k=\infty$, the only way the weighted sum can be
finite is if
\[\liminf_{k\rightarrow\infty} F_y(\boldsymbol{\lambda}^{(k)})\to F_y^\star.\]
\end{proof}

\section{Details on Numerical Experiments}\label{sec:num}

\subsection{Experiment details}
In this section, we present the details of numerical experiments. All the experiments are conduct on a computation cluster using a RTX 5080 and AMD 9950X3D. In the medical diagnosis experiment, the maximal number of iteration is $300$ and the learning rate $\eta_k$ is $5$. In the recommender system experiment, the maximal number of iteration is $200$ and the learning rate $\eta_k$ is $30$. We set the nominal level as $0.05$ and run both of the experiments over \rev{40} random seeds to plot Figure \ref{fig:exp}.

\subsection{Utility Functions}\label{sec:mat}
We present Table \ref{tab:utility_matrix} for the utility matrix of the medical diagnosis experiment and Table \ref{tab:rec_util}  for the utility matrix of the recommender systems experiment. We remark that the utility functions are adopt from \cite{kiyani2025decision}, where the utility matrix are generated by Chatgpt and can be replaced to any other user-specified matrices. Besides, we add a simple tie breaking procedure in the recommender systems experiment. In more detail, we use $\epsilon\cdot (3-R)$ as the true utility function, where $R$ denotes the rating and $\epsilon = 0.1$ is a small perturbation value.
\begin{table}[t]\label{tab:ut}
  \centering
  \caption{Utility matrix for medical diagnosis actions.}
  \label{tab:utility_matrix}
  \begin{tabular}{lcccc}
    \toprule
    True Label & No Action & Antibiotics & Quarantine & Additional Testing \\
    \midrule
    Normal (0)        & 10 &  2 &  2 &  4 \\
    Pneumonia (1)     &  0 & 10 &  3 &  7 \\
    COVID-19 (2)      &  0 &  3 & 10 &  8 \\
    Lung Opacity (3)  &  1 &  4 &  4 & 10 \\
    \bottomrule
  \end{tabular}
\end{table}

\begin{table}[t]
  \centering
  \caption{Utility matrix for the MovieLens recommendation task.}
  \label{tab:rec_util}
  \begin{tabular}{lccccc}
    \toprule
    Action        & Rating 1 & Rating 2 & Rating 3 & Rating 4 & Rating 5 \\
    \midrule
    No‐Recommend  &  0 &  0 &  0 &  0 &  0 \\
    Recommend     & -2 & -1 &  0 & +1 & +2 \\
    \bottomrule
  \end{tabular}

\end{table}

\subsection{Additional Empirical Diagnostics}\label{sec:rebuttal-diagnostics}
This section collects the empirical diagnostics that address three practical questions: whether action-conditioning substantially enlarges prediction sets, whether the missing action in Figure~\ref{fig:exp} reflects undefined conditional errors for baselines, and whether AC-RAC remains stable under rare actions and moderately larger finite action spaces. Unless otherwise noted, all diagnostics use the COVID medical-diagnosis setup.

\begin{table}[H]
\centering
\caption{Mean prediction-set size on the COVID task across nominal levels.}
\label{tab:set-size-alpha}
\makebox[\linewidth][c]{%
\begin{tabular}{lrrrr}
\toprule
$\alpha$ & AC-RAC & RAC & \texttt{score-1} & \texttt{score-2} \\
\midrule
0.01 & 3.797 & 3.773 & 2.980 & 2.957 \\
0.02 & 3.701 & 3.646 & 2.802 & 2.812 \\
0.03 & 3.566 & 3.487 & 2.664 & 2.681 \\
0.05 & 3.373 & 3.252 & 2.468 & 2.419 \\
0.10 & 2.964 & 2.750 & 2.102 & 2.058 \\
\bottomrule
\end{tabular}%
}
\end{table}

\begin{table}[H]
\centering
\caption{False discovery rate (FDR) and mean prediction-set size at $\alpha=0.05$ on the COVID task.}
\label{tab:fdr}
\makebox[\linewidth][c]{%
\begin{tabular}{lrr}
\toprule
Method & Mean prediction-set size & Overall FDR \\
\midrule
AC-RAC & 3.373 & 0.699 \\
RAC & 3.252 & 0.682 \\
\texttt{score-1} & 2.468 & 0.585 \\
\texttt{score-2} & 2.419 & 0.550 \\
\bottomrule
\end{tabular}%
}
\end{table}

Tables~\ref{tab:set-size-alpha} and~\ref{tab:fdr} show that AC-RAC is not obtaining stronger action-conditional guarantees through a dramatic increase in conservativeness. At $\alpha=0.05$, the mean set size increases from 3.252 for RAC to 3.373 for AC-RAC, a relative increase of about 3.7\%, while the overall FDR changes only modestly from 0.682 to 0.699. The smaller FDR values of the action-independent score baselines should be interpreted together with action usage: these methods often avoid difficult actions rather than calibrating them reliably.

\begin{table}[H]
\centering
\caption{Test-time action frequencies on the COVID task at $\alpha=0.05$.}
\label{tab:action-frequencies}
\makebox[\linewidth][c]{%
\begin{tabular}{lrrrr}
\toprule
Method & No Action & Antibiotics & Quarantine & Additional Testing \\
\midrule
AC-RAC & 4.87\% & 1.11\% & 2.72\% & 91.31\% \\
RAC & 30.94\% & 0.68\% & 0.00\% & 68.38\% \\
\texttt{score-1} & 4.72\% & 0.14\% & 0.00\% & 95.13\% \\
\texttt{score-2} & 12.45\% & 0.52\% & 0.00\% & 87.03\% \\
Best Response & 44.45\% & 1.82\% & 0.00\% & 53.73\% \\
\bottomrule
\end{tabular}%
}
\end{table}

Table~\ref{tab:action-frequencies} explains why action 2 is absent for the baselines in the medical-diagnosis miscoverage panel. AC-RAC is the only method that selects Quarantine, doing so on 115 out of 4234 test instances. This illustrates the phenomenon AC-RAC is designed to address: marginally calibrated or risk-neutral procedures can appear acceptable on average while collapsing onto dominant actions and leaving rare or difficult actions without meaningful conditional guarantees.

\begin{table}[H]
\centering
\caption{Rare-action stress test on the COVID task.}
\label{tab:rare-action}
\makebox[\linewidth][c]{%
\begin{tabular}{l l r r r}
\toprule
Target prevalence & Method & Target-action coverage & Overall coverage & Mean set size \\
\midrule
5.3\% (full) & AC-RAC & 0.927 & 0.946 & 3.373 \\
5.3\% (full) & RAC & 0.871 & 0.953 & 3.252 \\
1.0\% & AC-RAC & 0.934 & 0.945 & 3.350 \\
1.0\% & RAC & 0.871 & 0.952 & 3.242 \\
0.5\% & AC-RAC & 0.931 & 0.945 & 3.341 \\
0.5\% & RAC & 0.871 & 0.952 & 3.242 \\
0.1\% & AC-RAC & 0.934 & 0.945 & 3.349 \\
0.1\% & RAC & 0.871 & 0.952 & 3.242 \\
\bottomrule
\end{tabular}%
}
\end{table}

Table~\ref{tab:rare-action} shows that AC-RAC does not collapse when the target action becomes rare in calibration. Its target-action coverage remains around 0.93--0.94 and its mean prediction-set size stays essentially unchanged. We present this as a controlled stress test, since the predictive scores are fixed and only the calibration prevalence is perturbed.

\begin{table}[H]
\centering
\caption{Controlled scaling ablation on the COVID task with enlarged finite action spaces.}
\label{tab:action-scaling}
\makebox[\linewidth][c]{%
\begin{tabular}{r l r r r r}
\toprule
$|\mathcal A|$ & Method & Mean set size & Overall coverage & Critical error rate & Actions used \\
\midrule
4  & AC-RAC & 3.373 & 0.946 & 0.35\% & 4 \\
4  & RAC    & 3.252 & 0.953 & 3.99\% & 3 \\
6  & AC-RAC & 3.042 & 0.968 & 0.21\% & 6 \\
6  & RAC    & 3.128 & 0.957 & 3.35\% & 5 \\
8  & AC-RAC & 3.643 & 0.981 & 0.21\% & 8 \\
8  & RAC    & 3.136 & 0.951 & 4.02\% & 6 \\
10 & AC-RAC & 3.415 & 0.967 & 0.21\% & 9 \\
10 & RAC    & 3.187 & 0.950 & 4.70\% & 9 \\
\bottomrule
\end{tabular}%
}
\end{table}

Table~\ref{tab:action-scaling} suggests that the main empirical picture remains stable as the finite action space grows in this controlled regime. The mean prediction-set size remains in a narrow range rather than exploding with $|\mathcal A|$, AC-RAC continues to use nearly all available actions, and the critical error rate remains very low (0.21--0.35\%). By contrast, RAC has substantially higher critical error rates (3.35--4.70\%) and uses fewer actions in the smaller action spaces. This experiment is not a claim that large, continuous, or combinatorial action spaces are solved; rather, it probes moderate finite-action scalability under a hand-crafted enlarged utility matrix.

\subsection{Supplementary Figures}

Figure~\ref{fig:med} provides a detailed report of the medical diagnosis experiment:

\begin{itemize}
  \item \textbf{Top row (a)--(d):} Test‐time miscoverage curves for four decision rules (AC‐RAC in red, RAC in blue, score‐1 in orange, score‐2 in green) plotted against the nominal miscoverage level~$\alpha$ for each action $a\in\{0,1,2,3\}$. The dashed grey line shows the identity $\mathrm{miscoverage}=\alpha$.
  \item \textbf{Bottom left (e):} Average realized utility as a function of~$\alpha$ for the same four methods together with the best‐response baseline (black). All panels share a consistent color and marker scheme.
\end{itemize}
Figure \ref{fig:med} implies that AC‐RAC’s curves in panels (a)--(d) are close to the identity line across all actions and all nominal levels, whereas the baselines systematically deviate—over‐covering some actions and under‐covering others. This verifies that only AC‐RAC enforces the desired conditional coverage guarantee in finite samples.

Figure~\ref{fig:rec} shows the corresponding results for the recommender system experiment:

\begin{itemize}
  \item \textbf{Top row (a)--(b):} Test‐time miscoverage for actions No‐Rec and Rec.
  \item \textbf{Bottom (c):} Average realized utility vs.\ nominal~$\alpha$.
\end{itemize}

Figure \ref{fig:rec} implies that AC‐RAC is the only method whose miscoverage curves lie at or below the nominal line for both actions, confirming reliable action‐conditional coverage. In panel (c), AC‐RAC’s average utility increases smoothly as $\alpha$ shrinks, remaining within 5\% of RAC (which is marginally optimal) and significantly outperforming score‐1 and score‐2. This demonstrates that AC‐RAC preserves high decision quality despite its stricter safety requirements.

Taken together, these supplementary figures reinforce the main text’s conclusion: AC‐RAC uniquely achieves action‐conditional coverage while preserving robust utility across diverse settings.

\begin{figure}
    \centering
    \includegraphics[height=0.78\textheight,keepaspectratio]{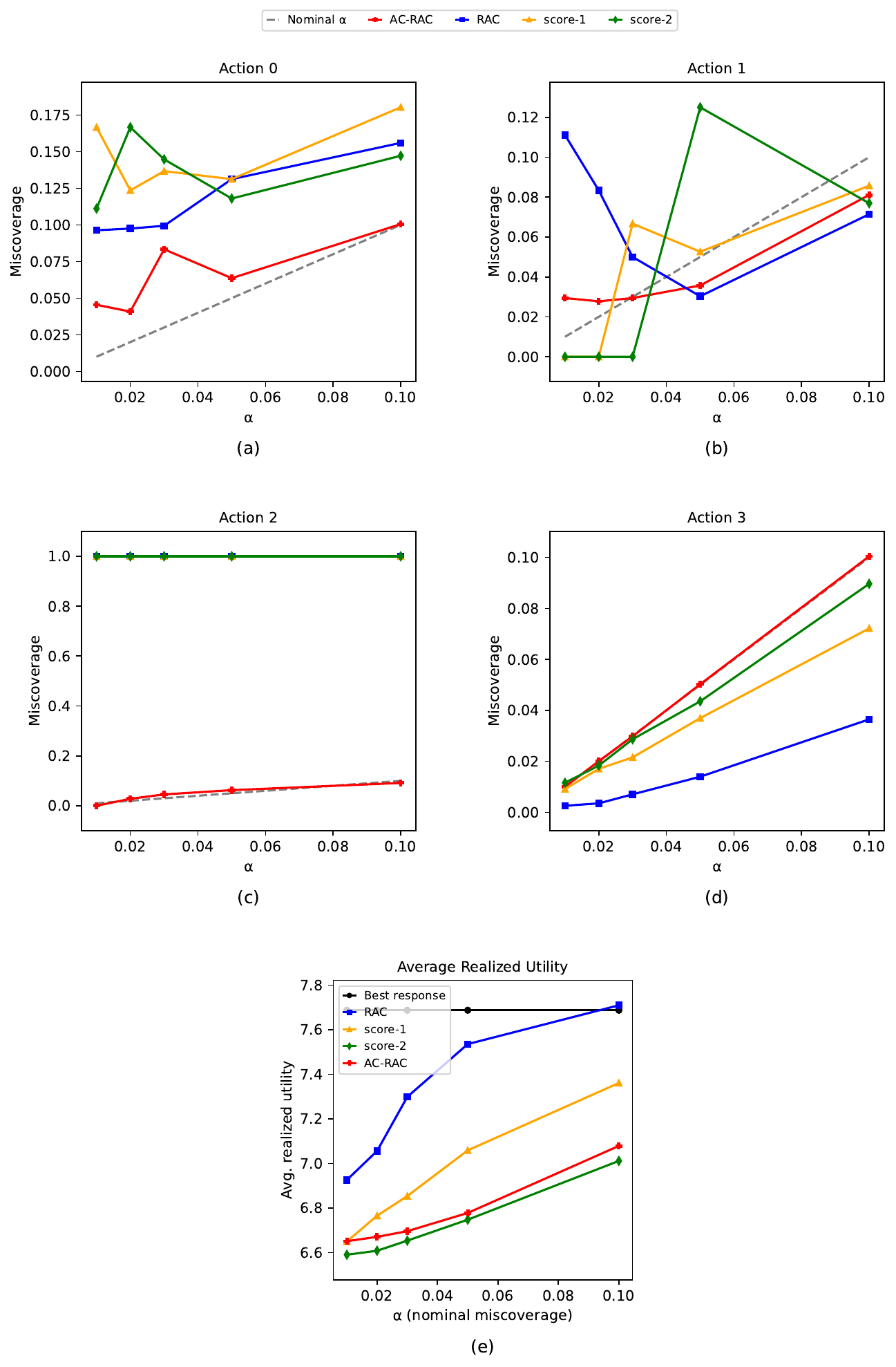}
    \caption{%
(a–d) Miscoverage vs.\ $\alpha$ for each action under AC‐RAC (red), RAC (blue), s1 (orange), s2 (green) (dashed identity).  
(e) Average utility vs.\ $\alpha$.}
    \label{fig:med}
\end{figure}

\begin{figure}
    \centering
    \includegraphics[width=1\linewidth]{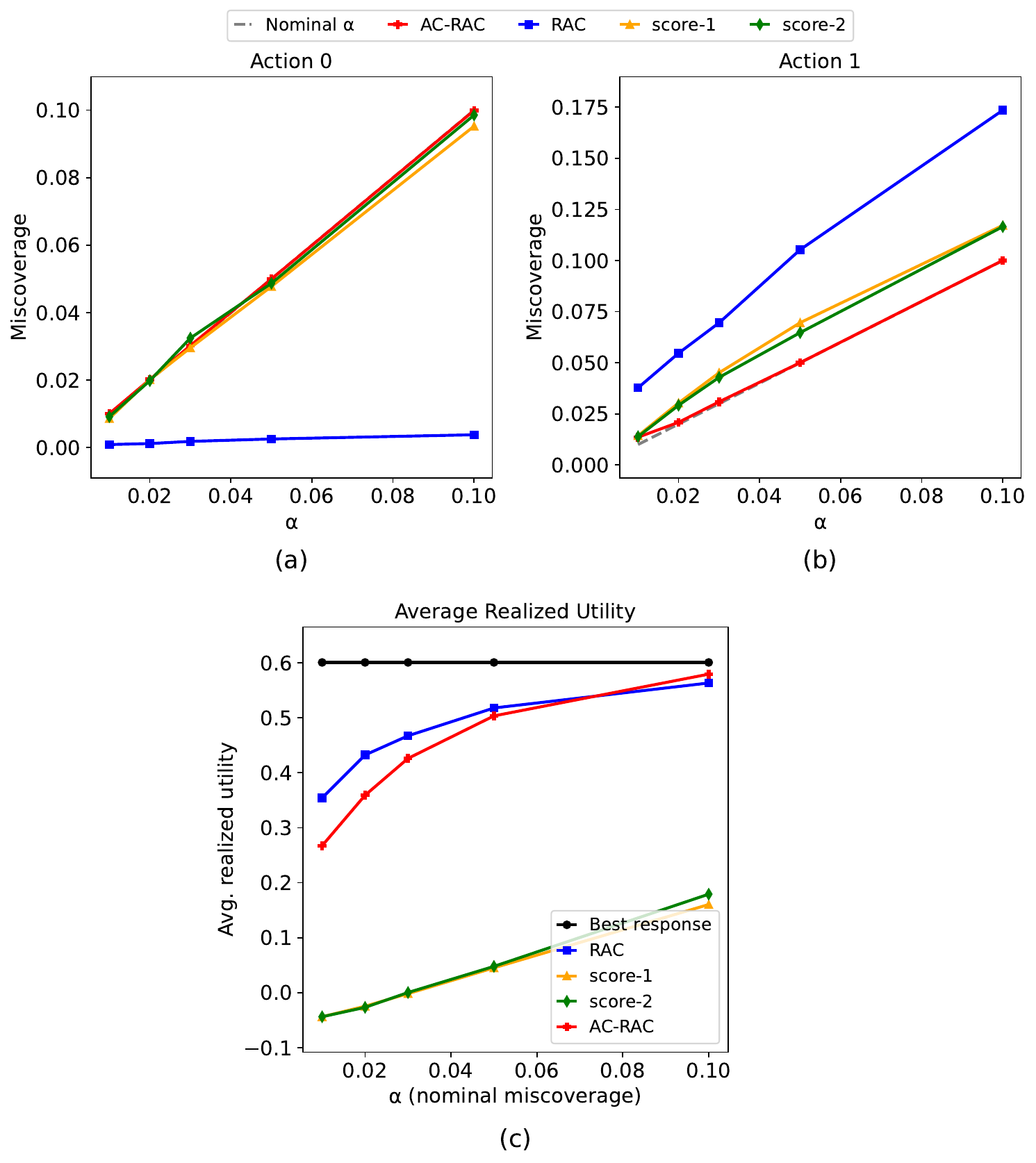}
    \caption{%
(a–b) Miscoverage vs.\ $\alpha$ for each action under AC‐RAC (red), RAC (blue), s1 (orange), s2 (green) (dashed identity).  
(c) Average utility vs.\ $\alpha$.}
    \label{fig:rec}
\end{figure}

\end{document}